\newcommand{\xmark}{\ding{55}}%
\newcommand{\indep}{\perp \!\!\! \perp}
\newcommand*\diff{\mathop{}\!\mathrm{d}}
\newtheorem{theorem}{Theorem}
\newtheorem{lemma}{Lemma}
\newtheorem{definition}{Definition}
\newcommand{\vX}{{\mathbf{X}}}
\definecolor{Gray}{gray}{0.9}
\title{Effective Causal Discovery under Identifiable Heteroscedastic Noise Model}
\author{
    Naiyu Yin\textsuperscript{\rm 1},
    Tian Gao\textsuperscript{\rm 2},
    Yue Yu\textsuperscript{\rm 3},
    Qiang Ji\textsuperscript{\rm 1}
}
\begin{document}

\maketitle

\begin{abstract}
Capturing the underlying structural causal relations represented by Directed Acyclic Graphs (DAGs) has been a fundamental task in various AI disciplines. Causal DAG learning via the continuous optimization framework has recently achieved promising performance in terms of both accuracy and efficiency. However, most methods make strong assumptions of homoscedastic noise, i.e., exogenous noises have equal variances across variables, observations, or even both. The noises in real data usually violate both assumptions due to the biases introduced by different data collection processes. To address the issue of heteroscedastic noise, we introduce relaxed and implementable sufficient conditions, proving the identifiability of a general class of SEM subject to these conditions. Based on the identifiable general SEM, we propose a novel formulation for DAG learning that accounts for the variation in noise variance across variables and observations. We then propose an effective two-phase iterative DAG learning algorithm to address the increasing optimization difficulties and to learn a causal DAG from data with heteroscedastic variable noise under varying variance. We show significant empirical gains of the proposed approaches over state-of-the-art methods on both synthetic data and real data. Our implementation: \url{https://github.com/naiyuyin/ICDH}.
\end{abstract}

\section*{Introduction}
Learning the statistical and causal dependencies of a distribution in the form of a directed acyclic graph (DAG) is of great interest in areas such as causal inference and Bayesian network structure learning. The underlying statistical or causal relations indicated by the DAG have been applied to various machine learning applications \citep{ott2004finding, spirtes1995causal}. Causal DAG plays an increasingly important role in many machine learning tasks, including out-of-distribution generalization~\citep{janzing2018detecting, shen2018causally, ahuja2021invariance}, domain adaptation~\citep{javidian2021scalable,stojanov2021domain}, and transfer learning~\citep{scholkopf2019causality}. 

The gold standard approach to performing causal discovery is to conduct controlled experiments, which can be expensive, time-consuming, and sometimes even infeasible. Therefore, algorithms have been proposed to learn a DAG from purely observational data. These algorithms can be divided into two categories: constraint-based methods and score-based methods. The constraint-based methods estimate  DAGs by performing independence tests between variables. Popular algorithms include PC \citep{spirtes2000causation} and FCI \citep{spirtes1995causal, zhang2008completeness}. The score-based methods search through the DAG space for a DAG with the optimal score. The differences among score-based methods usually come from search procedures, such as hill-climbing and Greedy Equivalent Search (GES) \citep{chickering2002optimal}. The structural causal model-based methods encode the statistical and causal dependencies via structural equation models (SEM). \citet{zheng2018dags} introduces a continuous DAG constraint and NOTEARS algorithm, which reformulates the original combinatorial DAG learning problem as a constrained continuous optimization. Such conversion enables the employment of continuous optimization techniques in follow-up works~\citep {kalainathan2018sam,yu2019dag,ng2019masked}.

Under either a linear or non-linear structural equation model (SEM) assumption, most of the current methods~\citep{zheng2018dags, zheng2020learning, yu2020dags, peters2014causal} usually adopt an assumption in SEM that the noises are additive to causal functions and are assumed to have equal variance for each variable. However, such an assumption may not hold in real-world data. For example, real-world data may be gathered from diverse sources, spanning different times and locations, employing a variety of collection techniques. As a result, the exogenous factors that impact each variable may differ, and noise variances become non-constant for observations. Incorrect assumptions regarding variable noise homoscedasticity, when they are heteroscedastic, may lead to inaccurate and biased estimates. Several works \citep{ng2020role, lachapelle2019gradient, park2020identifiability} seek to allow the noises of each variable to have different variances but fall short of fully addressing noise heteroscedasticity.

A few recent works explicitly extend the SEM with additive noise assumption to more general cases and estimate the noise observation heteroscedasticity. \citet{rajendran2021structure} employs SEM with multiplicative noise, while \citet{blobaum2018cause} assumes the existence of a joint distribution between noise and parent variables. \citet{lachapelle2019gradient}, \citet{xu2022causal, immer2022identifiability}, \citet{khemakhem2021causal}, \citet{duong2023heteroscedastic} modulate the noise variances as a deterministic function of the parent variables. However, these works adopt bivariate SEMs and infer pairwise causal relations. To learn a causal DAG with more than two variables, they need to estimate the causal order or the skeleton first using existing methods. 

To accurately estimate the DAG from data with heteroscedastic variable noises and varying residual variance across observations, we propose employing a more general form of SEM and, thus, designing a novel DAG structure learning formulation. The main advantage of using a general SEM lies in relaxing the assumptions on noise variances, allowing not only unequal variances across variables but also varying variances across observations for the same variables. Such relaxation reduces model misspecification and enables the algorithm to more accurately capture noise variances and learn DAGs from challenging yet realistic data. However, this relaxation also significantly increases the difficulties in optimization modeling~\citep{lachapelle2019gradient}.

\noindent\textbf{Main Contributions:} 
To tackle those issues, we make three major contributions: 1) We introduce relaxed, implementable sufficient conditions for the identifiability of a general class of multivariate SEM. Guided by the identifiability conditions, we propose a novel DAG learning formulation that considers the variability of noise variances both among variables and across observations. To achieve this, our formulation models the parameters of the noise distribution with neural networks (Eq.~\eqref{eq_conditiona_distribution}). 2) We present an effective and practical two-phase DAG learning algorithm, which iteratively minimizes the objective to ensure accurate estimation of noise variances and DAG. 3) Empirical results demonstrate that our method achieves comparable accuracy on synthetic homoscedastic noise data compared to state-of-the-art methods. Moreover, it significantly outperforms these methods on synthetic heteroscedastic data and real data.

\section*{Related works}
In an SEM with Gaussian additive noise, functional causal model-based methods, such as \citet{chen2019causal}, assume the variables have homoscedastic noises\footnote{If a set of variable noises is homoscedastic, then they have equal variances.} with equal noise variances across observations. In other words, the variable noises have equal variance across both variables and observations. The strong homoscedastic assumption is also implicitly posed for methods~\cite{zheng2018dags, zheng2020learning, yu2019dag, gao2020polynomial} that adopt reconstruction loss under the same SEM setting\footnote{Please refer to Appendix \ref{derivations_generality} for details. }. \citet{ng2020role} relaxes the homoscedastic variable noise assumption, allowing the noises of different variables to have non-equal variances. Similarly, \citet{lachapelle2019gradient} and \citet{park2020identifiability} perform the same relaxation.

Moreover, the above methods assume equal noise variances for each variable across observations, whereby the variable noise variance may vary from observation to observation due to the variation of the data collection conditions. Noise observation heteroscedasticity modeling has received increasing attention over the past few years. The general approach is to relax the independence between the parent variables and the additive noise. \citet{blobaum2018cause} allows a dependency between parent variables and the noise by assuming a joint distribution of two terms exists. \citet{xu2022causal} models the noise variance as a piece-wise function of the parent variables with limited choices of variance values. \citet{khemakhem2021causal, immer2022identifiability, duong2023heteroscedastic} employ a general form of SEM and modulate the noise variance as a deterministic function of the parent variables. However, \citet{khemakhem2021causal} and \citet{immer2022identifiability} are mainly designed to identify pair-wise cause-effect relations for bivariate SEMs. \citet{duong2023heteroscedastic} proposes to estimate the causal order and then orient the pair-wise causal directions for multivariate SEMs. An extension of GraN-DAG, denoted as GraN-DAG++, also estimates the noise variances as a function of parent variables and learns a DAG for the multivariate case. However, due to the heteroscedasticity complexity and optimization limitation, GraN-DAG++ learns at best comparable accurate DAG. \citet{rajendran2021structure} employs the multiplicative SEMs to model the heteroscedastic noise data but learn the causal structure via a discrete optimization framework.
We summarize the above methods in Table \ref{tab:methods}.
\begin{table*}[hpt]
    \centering
    \normalsize
    \begin{tabular}{l||c|c|c|c|c}
    \toprule
        \multirow{2}{*}{\textbf{SoTA Methods}} & \multicolumn{4}{c|}{\textbf{SEM}} & \textbf{Algorithm}  \\
          & $\#$ var. & Causal function & Noise & Identifiable & Optimization \\
         \hline
         NOTEARS~\citep{zheng2018dags} & Multivariate & Linear & Homo  & \checkmark & Continuous\\
         NOTEARS-MLP~\citep{zheng2020learning} & Multivariate & Nonlinear & Homo  & \checkmark & Continuous \\
         GOLEM~\citep{ng2020role} & Multivariate & Linear & Homo  & \xmark & Continuous \\
         GraN-DAG~\citep{lachapelle2019gradient} & Multivariate & Nonlinear & Homo  & \checkmark & Continuous\\
         GraN-DAG++~\citep{lachapelle2019gradient} & Multivariate & Nonlinear & Hetero  & \xmark & Continuous \\
         US\citep{park2020identifiability} & Multivariate & Linear & Hetero  & \checkmark & Combinatorial \\
         HEC~\citep{xu2022causal} & Bivariate & Nonlinear & Hetero  & \checkmark & Combinatorial \\
         CAFEL~\citep{khemakhem2021causal} & Bivariate & Nonlinear & Hetero  & \checkmark & Combinatorial\\
         LOCI~\citep{immer2022identifiability} & Bivariate & Nonlinear & Hetero  & \checkmark & Combinatorial\\
         GFBS~\citep{gao2020polynomial} &  Multivariate & Both & Hetero  & - 
         & Combinatorial\\ 
         
         HOST\citep{duong2023heteroscedastic} & Multivariate & Nonlinear & Hetero & \checkmark & Combinatorial \\
         \hline
        ICDH(Ours) & Multivariate & Nonlinear & Hetero  & \checkmark  & Continuous\\
        \bottomrule
        \end{tabular}
    \caption{\normalsize Summary of SEMs and algorithms for SoTA methods. "Homo" represents homoscedastic noise, and "Hetero" represents heteroscedastic noise. GFBS employs multiple linear and nonlinear SEMs, each with varying identifiability.}
    \label{tab:methods}
\end{table*}

In the following section, we first introduce the general form of SEM. Then we introduce sufficient conditions that provide theoretical justification for its identifiability on multivariate variables. We then propose a general DAG learning formulation, which cannot only accurately model the variation of noise variance across both variables and observations but also capture a more accurate DAG structure in complex and noisy real-world datasets or applications.

\section*{Background and formulation}
\subsection*{Preliminaries}\label{sec_Preliminaries}
\noindent\textbf{Structural Equation Model (SEM) with additive noise: } Let $X$ be a set of $N$ random variables, $X = [X_1, X_2, \cdots, X_N]$. The causal relations between a variable $X_n\in X$ and its parents can be modeled via Eq.~\eqref{eq_sem_g}:
\begin{equation}\label{eq_sem_g}
   X_n = f_n(X_{\pi_n}) + E_n, n = 1,2,\cdots,N
\end{equation}
where $f_n(\cdot)$ is the structural causal function. $X_{\pi_n}$ are the parent variables of $X_n$. $E_n$ is the exogenous noise variable corresponding to variable $X_n$. Together they account for the effects from all the unobserved latent variables and are assumed to be mutually independent~\citep{peters2011causal}.

\noindent\textbf{DAG structure learning under SEM: } To learn a DAG $\mathcal{G}$ from a given joint distribution $P(X)$, $X$ is usually modeled via SEMs defined by a set of continuous parameters {$A = (A_1, A_2, \cdots, A_N)$} that encode all the causal relations, i.e.,
\begin{equation}\label{eq_sem_A}
    X_n = f_n(X; A_n) + E_n, n = 1, 2, \cdots, N
\end{equation}
{where $A_n$ are the parameters in each SEM.}
Compared to Eq.~\eqref{eq_sem_g}, it is easy to see that $A_n$ selects parent variables $X_{\pi_n}$ for each $X_n$. The goal is to estimate $A$, based on which we can infer the DAG $\mathcal{G}$. Let $\vX\in\mathbb{R}^{M\times N}$ denote the input matrix of $M$ observations of the random variable set $X$. Given $\vX$, $A$ is estimated by minimizing the loss function $F(\vX, A)$, subject to the continuous acyclicity constraint $h(A) = tr(e^{A \circ A}) - N= 0$~\citep{zheng2018dags, zheng2020learning}\footnote{The continuous DAG constraints for linear SEM and nonlinear SEM are introduced respectively in \citep{zheng2018dags} and \citep{zheng2020learning}. We use $h(Z)$ to refer that the acyclicity constraint is posed on parameters $Z$, regardless of SEM types.}
\begin{equation}\label{eq_general_optimization}
  \begin{split}
        A^* &= \arg\min_A  F(\vX, A) \qquad \text{subject to } h(A) = 0
    \end{split}
\end{equation}
where $F(\vX, A)$ evaluates the negative log-likelihood of $A$ as the underlying relations encoded in $\vX$. The parameterization with $A$, along with the introduction of continuous acyclicity constraint, transforms the DAG learning under SEM into a continuous optimization problem and enables the usage of powerful optimization techniques.

\noindent\textbf{General SEM and identifiability issue.} To ensure the employed SEMs are identifiable, i.e., a unique graph $\mathcal{G}$ can be identified from the joint distribution $P(X)$ generated from SEMs, the exogenous variable is usually assumed to be additive (Eq.~\eqref{eq_sem_g}). The general SEMs in Eq.~\eqref{eq_sem_general} are proven to be unidentifiable without any constraint~\citep{zhang2015distinguishing}.
\begin{equation}\label{eq_sem_general}
    X_n = f_n(X_{\pi_n}, E_n), n = 1, 2, \cdots, N
\end{equation}
However, some recent works try to investigate the identifiability of the general SEM with weaker assumptions and develop DAG learning methods based on the identifiable SEM. 
\subsection*{Problem statement}\label{sec_Proposed_formulation}
We first introduce one of the general SEM formulations in \textbf{Definition \ref{def:HNM}} that modulate the noise variance with cause variables. The SEMs we consider in \textbf{Definition \ref{def:HNM}} are about SEMs with heteroscedastic additive noise. It generalizes the causal function $f_n(\cdot)$ in Eq.~\eqref{eq_sem_A} from merely an additive transformation of causes and exogenous noise to both affine and additive transformation.
Such generalization increases the model's ability to approximate data with more complex types of noise. The general SEM can address data heteroscedasicity, whereby the noise variances vary across variables and observations, depending on the causes. 
\begin{definition}\label{def:HNM}
(Heteroscedastic noise model) The SEMs are heteroscedastic noise models (HNMs) if Eq.~\eqref{eq_hnm} holds for each $X_n\in X$, 
\begin{equation}\label{eq_hnm}
    X_n = f_n(X_{\pi_n}) + {\sigma_n(X_{\pi_n})}E_n, {n=1,2,\cdots, N}
\end{equation}
where $E_1, E_2, \cdots, E_n$
are statistically independent and all follow Gaussian distributions. $\sigma_n(X_{\pi_n})>0$.
\end{definition}
The investigation of DAG learning methods under HNM has been increasingly studied due to its flexibility in modeling more complex and general data generation processes in realistic data. Let $\mathbb{E}[E_n|X_{\pi_n}] = 0$ and $Var[E_n|X_{\pi_n}] = 1$, then the conditional distribution under HNM $p(X_n|X_{\pi_n})\sim \mathcal{N}\big(f_n(X_{\pi_n}), \sigma^2_n(X_{\pi_n})\big)$.  

\noindent\textbf{Advantages of HNM:} We choose the SEM that modulates noise variances with cause variables for three reasons. First, it relaxes the strong independence assumption between exogenous variables and observed variables. Secondly, it satisfies the assumed data generation process, whereby observations for each variable are generated using their cause variables. Moreover, it is easy to implement via deep neural networks, which are known for their ability to
modeling complex data distributions.

\noindent\textbf{Limitations of prior works under HNM:} \citet{xu2022causal} models the variance $\sigma_n$ as a deterministic piece-wise 
function of the parent variables, which limits the approximation of the variances to a few choices. 
\citet{khemakhem2021causal} limits their choices of $f$ to be nonlinear and invertible functions to ensure identifiability. However, this identifiable condition cannot readily be extended to multivariate cases. For the bivariate case, the invertibility of $f$ is easily satisfied since its inputs and outputs are values of a single variable. For the multivariate cases, the input into the $f_n$ is the parent variables $X_{\pi_n}$ of variable $X_n$. The dimensions match only when the number of parent variables is 1. There is no guarantee that an invertible function $f_n$ exists for $X_n$. 
\citet{duong2023heteroscedastic} proposes to learn the causal DAG by first searching for the causal order and orienting edges subject to the obtained order. However, its performance is susceptible to the accuracy of independence tests,
which can be challenging to perform with difficult data. Early errors in order estimation can propagate to later stages of causal direction orientation, causing the algorithm to learn inaccurate causal graphs. Moreover, due to the time complexity of subset independence tests, the algorithm cannot scale up to large models.
 
Therefore, \textbf{our goal is to formulate the DAG learning problem under the identifiable multivariate HNM into a continuous optimization framework and solve the optimization with powerful tools such as neural networks.} To do so, we first introduce relaxed implementable sufficient conditions that provide identifiability for multivariate HNM in section \ref{sec: ident-cond}. Guided by those conditions, we propose our continuous DAG learning formulation in section \ref{sec: formulation-HNM}. 

\subsection*{Proposed identifiable HNM}\label{sec: ident-cond}
In this section, we introduce the sufficient conditions for the HNM to uniquely identify a DAG from the given data distribution in \textbf{Theorem \ref{theorem: identifiable}}.
We can theoretically prove that the HNM is identifiable if those sufficient conditions hold. 

\begin{theorem}\label{theorem: identifiable} (Identifiability)
The formulation in Eq.~\eqref{eq_hnm} is identifiable if the following conditions are satisfied: 1) $f_1, f_2, \cdots, f_N$ are nonlinear; 2) $\sigma_1, \sigma_2, \cdots, \sigma_N$ are piece-wise  
functions. 3) $E_1, E_2, \cdots, E_N$ are independent and follow Gaussian distributions\footnote{{$E_n$s are i.i.d Gaussian is a sufficient but not necessary condition of identifiability. By assuming i.i.d. Gaussian noise, sufficient conditions allow the HNM for one direction to exist under the bivariate case, and serve as the most essential lemma for our identifiability theorem.}}.
\end{theorem}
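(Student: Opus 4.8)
The plan is to reduce everything to the bivariate case and then lift to arbitrary DAGs. Concretely, I would first prove the essential lemma that whenever data are generated by an HNM $X\to Y$ satisfying conditions (1)--(3), no HNM in the reverse direction $Y\to X$ can induce the same joint density $p(x,y)$; this is the ``only one admissible direction'' statement that the footnote flags as central. Given bivariate identifiability, the multivariate result follows by the standard pairwise/inductive argument (in the spirit of \citet{peters2014causal}): if two distinct HNM-DAGs induced the same $P(X)$, one could locate a pair of variables whose edge is oriented oppositely, condition on the remaining relevant variables so that the conditional law of that pair is again a bivariate HNM in each orientation, and invoke the bivariate lemma to reach a contradiction.

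For the bivariate lemma I would argue by contradiction. Writing the joint log-density in the forward Gaussian-conditional factorization gives
\begin{equation}
\log p(x,y) = \log p_X(x) - \tfrac12\log\big(2\pi\sigma^2(x)\big) - \frac{(y-f(x))^2}{2\sigma^2(x)}.
\end{equation}
Two structural facts drive the argument: the forward factorization forces $\partial_y^2 \log p(x,y) = -1/\sigma^2(x)$ to depend on $x$ only, while an assumed backward HNM forces $\partial_x^2 \log p(x,y) = -1/\tau^2(y)$ to depend on $y$ only. Differentiating the displayed expression in $x$ shows that $\partial_x \log p$ is a quadratic polynomial in $y$ whose coefficients are determined by $f$, $\sigma^2$, and $p_X$. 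Imposing that $\partial_x^2 \log p$ be free of $x$ then forces each of those coefficients to be affine in $x$, which yields a small system of ordinary differential equations for $f$, $\sigma^2$, and $\log p_X$.

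The decisive step is analyzing the top-degree (quadratic-in-$y$) equation, which constrains $\sigma$ alone: it integrates to $\sigma^2(x) = 1/q(x)$ for a single global quadratic $q$, so $\sigma$ would have to be a globally smooth, analytic function of $x$. This contradicts condition (2), since a genuinely piecewise $\sigma$ cannot coincide with one such analytic form across all its pieces. The only escape is the degenerate case $\sigma\equiv\text{const}$, where the HNM collapses to an additive Gaussian-noise model; there condition (1) closes the argument, because a nonlinear $f$ with Gaussian additive noise is known to admit no backward additive model (Hoyer et al.; \citet{peters2014causal}). Either way the reverse HNM cannot reproduce $p(x,y)$, so the direction is identified.

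I expect the \textbf{main obstacle} to be twofold. First, turning ``piecewise $\sigma$ is not globally $1/\sqrt{q}$'' into a clean contradiction requires handling the matching of integration constants across pieces and the non-smoothness (or jumps) at breakpoints, together with the positivity/support constraints that keep $\sigma^2$ and the candidate $\tau^2$ valid variances. Second, and more delicate, is the multivariate lifting: one must verify that conditioning on the other variables genuinely returns a bivariate HNM in both orientations and impose the non-degeneracy (causal-minimality-type) conditions that exclude accidental reversibility, so that the bivariate lemma can be applied edge by edge.
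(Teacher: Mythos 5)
Your proposal follows essentially the same route as the paper's proof: a bivariate lemma showing that conditions (1)--(2) rule out the only two scenarios under which forward and backward heteroscedastic models can co-exist (namely $\sigma^2 = 1/Q$ with $Q$ a degree-two polynomial, or the constant-variance linear-Gaussian case), followed by the standard Peters-et-al.-style lifting to the multivariate case (locate a reversed edge pair, condition on the remaining set to obtain bivariate HNMs in both orientations, contradict the lemma). The only difference is that the paper simply cites Theorem 2 of \citet{khemakhem2021causal} for that bivariate characterization, whereas you re-derive it via the cross-second-derivative argument ($\partial_y^2 \log p$ a function of $x$ alone, $\partial_x^2 \log p$ a function of $y$ alone, hence polynomial ODE constraints) --- which is in substance how the cited theorem is proved, so this is a self-contained presentation of the same argument rather than a different one.
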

Please refer to the Appendix \ref{proof_identifiability} for all proofs.

The nonlinearity for $f_n$ is in terms of $\forall X_j \in X_{\pi_n}$. The nonlinearity in terms of each input variable is slightly stronger than the nonlinearity in terms of the input parent set. However, it is easy to satisfy if we employ deep neural networks as $f_n$s because the nonlinear activation function is applied to each dimension of the inputs.

\noindent\textbf{Comparison with identifiable PNL:} The identifiable post-nonlinear model (PNL) in \citet{zhang2012identifiability} assumes the SEM between a variable $Y$ and its cause $X$ follows $Y=f_2(f_1(X) + N)$, where $N$ is the independent noise. They further assume $f_2$ to be a fixed non-invertible function. Compare the PNL to the HNM, there exist cases that can be proved identifiable and covered by one model but not the other. Hence, it is impossible to compare the flexibility of the two models. They are developed to address the identifiability of different classes of SEMs.
\subsection*{Proposed formulation}\label{sec: formulation-HNM}
To perform DAG learning under identifiable multivariate HNM, we parameterize Eq.~\eqref{eq_hnm} with a set of continuous parameters that enforce the formulation to satisfy the identifiability conditions. We instantiate Eq.~\eqref{eq_hnm} with continuous parameters $A$ and $B$, where $A, B$ are the parameters for causal functions $f = (f_1, f_2, \cdots, f_N)$ and variances estimation functions {$\sigma = (\sigma_1, \sigma_2, \cdots, \sigma_N)$}. 
Hence, the Eq.~\eqref{eq_hnm} can be then re-written as:
\begin{equation}\label{eq_hnm_AB}
    \scalebox{0.95}{$
    \begin{split}
        X_n &= f_n(X, A_n) + \sigma_n(X, B_n)E_n, n=1,2,\cdots, N \\
    \end{split}
    $}
\end{equation}
There are three identifiability conditions to satisfy according to \textbf{Theorem \ref{theorem: identifiable}}. To satisfy condition (3), we assume $E_n\sim \mathcal{N}(0, 1)$ for $n=1,2,\cdots, N$. Then we adopt 2-layer Multi-layer Perceptrons (MLPs) for $f_n(\cdot)$s and $\sigma_n(\cdot)$s. By setting the activation functions as sigmoid functions for $f_n$s , ReLU functions for $\sigma_n$s, conditions (1) and (2) are satisfied. We use a 2-layer MLP in our formulation for simplicity. The number of layers and hidden neurons can vary as long as conditions (1) and (2) hold.

Besides the three conditions to ensure the identifiability, an underlying assumption in Eq.~\eqref{eq_hnm_AB} is that the parent variables that are input into functions $f_n$ and $\sigma_n$ should be the same, or are selected from the same set.
To ensure that such an assumption is always satisfied in our formulation, we design $A$ and $B$ to share partial parameters. In particular, we let the MLPs for $f_n$ and $\sigma_n$ share the first layer weights. We denote the first layer weights of $f_n$ as $W^{(1)}_n$, the second layer weights as $W^{(2)}_n$, hence we have
\begin{equation}\label{eq_f_nn}
\scalebox{0.95}{$
f_n(X, A_n) = {f_n(X, W_n^{(1)}, W_n^{(2)})}= W_n^{(2)}s(W_n^{(1)}X^T)$}
\end{equation}
where $W_n^{(1)}\in\mathbb{R}^{m_1\times N}, W_n^{(2)}\in\mathbb{R}^{1\times m_1}$. $A_n = (W_n^{(1)}, W_n^{(2)})$. $s(\cdot)$ is the sigmoid activation function. We let $\sigma_n$ share the first layer weights as $f_n$ and denote the second layer weights for $\sigma_n$ as $W^{(3)}_n$. We use a scalar parameter $W^{(3)}_{n0}$ to ensure the strict positivity of $\sigma_n$. Hence we have
\begin{equation}\label{eq_g_nn}
    \begin{split}
        \sigma_n(X, B_n) =& \sigma_n(X, W_n^{(1)},W_n^{(3)}, W_{n0}^{(3)}) \\
        =& \text{ReLU}\big(W_n^{(3)}s(W_n^{(1)}X^T)\big) + e^{W_{n0}^{(3)}}
    \end{split}
\end{equation}
where $W_n^{(3)}\in\mathbb{R}^{1\times m_1}$. $W_{n0}^{(3)} \in \mathbb{R}$. $B_n = (W_n^{(1)}, W_n^{(3)}, W_{n0}^{(3)})$. We place the acyclicity constraint on the shared parameters $W^{(1)} = (W^{(1)}_1, W^{(1)}_2, \cdots, W^{(1)}_N)$ to enforce the $W^{(1)}$ to encode causal relations. Intuitively, we assume there is one unique $\mathcal{G}$, represented by the weighted matrices $W^{(1)}$. $W^{(2)} = (W^{(2)}_1, W^{(2)}_2, \cdots, W^{(2)}_N), W^{(3)} = (W^{(3)}_1, W^{(3)}_{10}, W^{(3)}_2, W^{(3)}_{20}, \cdots, W^{(3)}_N, W^{(3)}_{N0})$ are the parameters to estimate the the mean and variance using parent sets selected by $W^{(1)}$. $W^{(2)}, W^{(3)}$ may further select subsets from the parent sets for estimation. We infer our estimation of the DAG $\mathcal{G}$ from $W^{(1)}$.

\noindent\textbf{Advantages of sharing parameters:} The formulation that shares $W^{(1)}$ automatically ensures that $f_n$s and $\sigma_n$s employ the same set of parent variables as inputs. Without parameter sharing, we need to impose additional constraint that enforces the DAG structures we inferred from $f_n$s and $\sigma_n$s separately to be consistent with each other. Moreover, the algorithm without parameter sharing may also suffer from increased time complexity, due to the enforcement of time-consuming acyclicity constraints on parameters from both $f_n$s and $\sigma_n$s.

\subsection*{Optimization objective and difficulties}
The goal is to estimate a DAG $\mathcal{G}$, given $M$ observations of $X$, i.e., input matrix $\vX = \{\vX(m)\}_{m=1}^M$. $\vX(m)\in \mathbb{R}^{1\times N}$ is the $m^{th}$ observation of $X$. {$\vX(m) = [\vX_1(m), \vX_2(m), \cdots, \vX_N(m)]$, where $\vX_n(m)$} is the $m^{th}$ observation of variable $X_n$. 
{According to the HNM, the variance for $\vX_n(m)$ can be modeled via $\sigma^2_n(\vX(m), B_n)$.} Since $E_n \sim \mathcal{N}(0, 1)$, given $\vX(m)$, the conditional distribution of the $m^{th}$ observation corresponding to variable $X_n$ given its parent variables $\vX_{\pi_n}(m)$, {i.e. $\vX_n(m)$}, can be modeled as:
\begin{equation}\label{eq_conditiona_distribution}
    \scalebox{0.80}{$
    p(\vX_n(m)|\vX_{\pi_n}(m)) \sim \mathcal{N}\Big(f_n\big(\vX(m), A_n\big), \sigma^2_n(\vX(m), B_n)\Big)$}
\end{equation}
Based on Eq.~(\ref{eq_conditiona_distribution}), we derive the negative log-likelihood of the marginal distribution $p(\vX)$ as the objective in our proposed formulation:
\begin{equation}\label{eq_nll}
    \scalebox{0.8}{$
    \begin{split}
    \mathcal{L}_{nll}(\vX, A, B) =& 
    \sum_{m,n=1}^{M,N} \Big[\log\big(\sigma_n(\vX(m), B_n)\sqrt{2\pi} \big) \\
    &+ \frac{\big(\vX_n(m) - f_n(\vX(m), A_n)\big)^2}{2\sigma^2_n(\vX(m), B_n)}\Big] 
    \end{split}$}
\end{equation}
The detailed derivation can be found in Appendix \ref{nll_derivation}. 

Substituting the Eq.~\eqref{eq_f_nn} and \eqref{eq_g_nn} into negative log-likelihood loss in Eq.~(\ref{eq_nll}), we obtain the training objective under proposed formulation w.r.t $W^{(1)}, W^{(2)}$, and $W^{(3)}$:
\begin{equation}\label{nll_loss_W}
\scalebox{0.75}{$
\begin{split}
     & \mathcal{L}_{nll}(\vX, W^{(1)}, W^{(2)}, W^{(3)}) \\
     =& \sum_{m, n=1}^{M, N}\Big[\log \sqrt{2\pi} + \log [\text{ReLU}\Big(W_n^{(3)}s\big(W_n^{(1)}\vX^T(m)\big)\Big)  + e^{W_{n0}^{(3)}}] \\
     &+ \frac{\Big(\vX_n(m) - W_n^{(2)}s\big(W_n^{(1)}\vX^T(m)\big)\Big)^2}{2\big[\text{ReLU}\Big(W_n^{(3)}s\big(W_n^{(1)}\vX^T(m)\big)\Big)  + e^{W_{n0}^{(3)}}\big]^2 }\Big]
\end{split}$}
\end{equation}
The DAG learning problem becomes the constrained continuous optimization that finds the optimal values $(W^{(1)})^*, (W^{(2)})^*, (W^{(3)})^*$ by minimizing $\mathcal{L}_{nll}(\vX, W^{(1)}, W^{(2)}, W^{(3)})$ subject $h(W^{(1)}) = 0$.

Intuitively, by introducing and estimating conditional distribution variances $\bm \sigma = \{\sigma^2_n(\vX(m), B_n)\}_{n,m=1}^{N, M}$ as functions of causes in HNM, our formulation allows the modeling of heteroscedasticity within the data noise. However, on the other hand, $\bm \sigma$ estimation inevitably increases modeling and optimization difficulties significantly, causing state-of-art global DAG learning methods like GraN-DAG++~\cite {lachapelle2019gradient} to fail. 

The difficulty of learning the causal DAG under the proposed formulation lies in effectively minimizing the negative log-likelihood loss over two sets of parameters $A$ and $B$ jointly while the interplay between optimization over $A$ and $B$ compromises the accuracy of each other. If the algorithm jointly learns $A, B$, the optimization process tends to minimize the negative log-likelihood loss by learning a set of $B$ that significantly increases the estimated $\bm \sigma$. As a result, the algorithm can reach a stationary solution without enforcing the residual errors to be small. To solve such difficulties, we propose a DAG learning approach based on a two-phase algorithm, which estimates causal functions parameters $A$ and $\bm \sigma$ estimation parameters $B$ alternatively and iteratively. 
\begin{figure*}
    \centering
    \normalsize
    \includegraphics[width=1\linewidth]{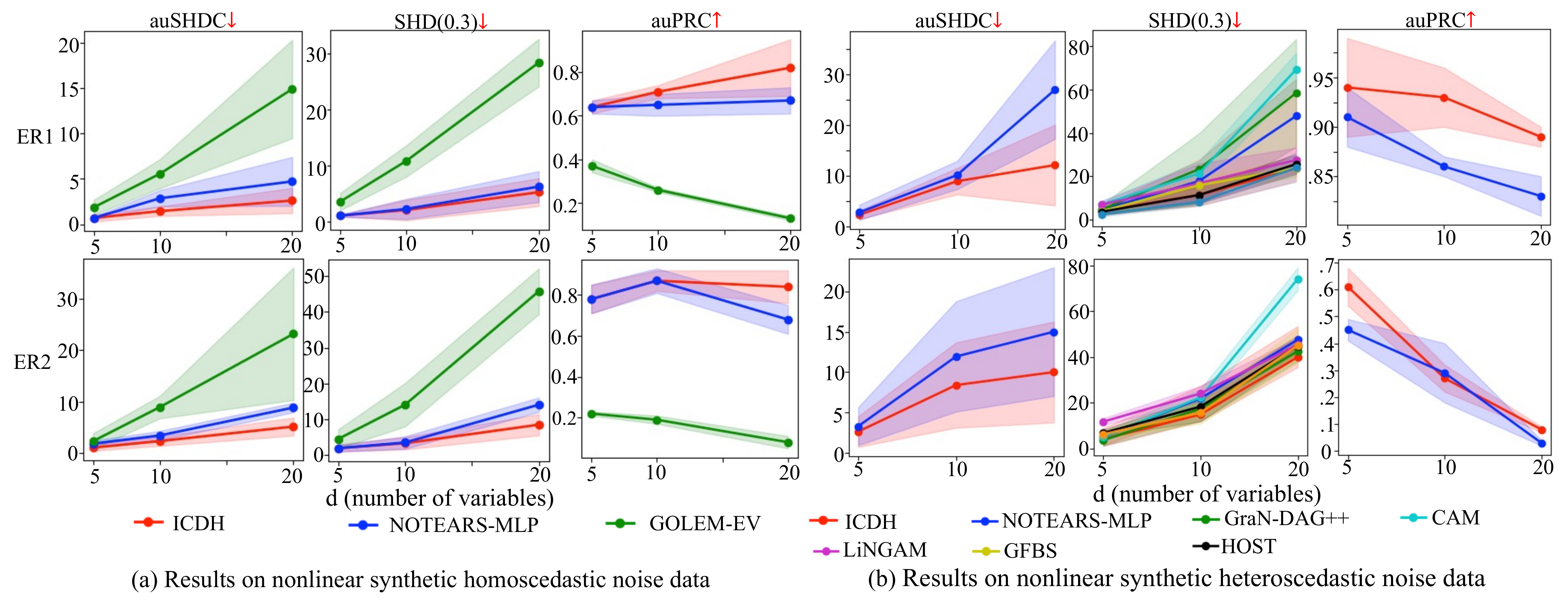}
    \caption{Comparison of SoTA baselines on synthetic data: results(mean, standard error) on auSHDC, SHD, and auPRC. We only report auSHDC and auPRC on baselines directly return weighted adjacent matrices. Our method is shown in the red curve.}
    \label{fig:syn_res}
\end{figure*}

\section*{Two-phase iterative learning algorithm}
As we mentioned above, we introduce and model the parameters for conditional distribution variances $\bm \sigma$ into our model. To avoid the interplay between optimization over mean and variance parameters of conditional distributions, we propose to first estimate the variances $\bm \sigma$ and then estimate mean parameters under fixed variance. To provide mathematical justification for such an iterative learning approach, we introduce posterior distribution for variance $q$ in Eq.~\eqref{EM-deri}. For simplicity, we denote $\sigma_n^2(\vX(m), B_n)$ in Eq.~\eqref{eq_conditiona_distribution} as $\sigma^2_n(m)$, and $\bm \sigma^2(m) := \{\sigma^2_n(m)\}_{n=1}^N$, $\bm \sigma^2  := \{\bm \sigma^2(m)\}_{m=1}^M$. 
Hence we can write the marginal log-likelihood of $\vX$ as follows:
\begin{equation}\label{EM-deri}
    \scalebox{0.85}{$
    \begin{split}
        \log p(\vX | A) 
       &\geq \int_{\bm \sigma^2} q(\bm\sigma^2|\vX, \Theta_q)\log \frac{p(\vX, \bm \sigma^2|A)}{q(\bm \sigma^2|\vX, \Theta_q)}\diff \bm \sigma^2 
    \end{split}$}
\end{equation}
We drop the entropy term \scalebox{0.85}{$q(\bm \sigma^2|\vX,\Theta_q)\log q(\bm \sigma^2|\vX,\Theta_q)$}, since we consider the $\Theta_q$ is independent of current parameters $A$. The objective is to maximize the lower bound of the marginal log-likelihood:
\begin{equation}
\scalebox{0.85}{$
    \begin{split}
        A^* = \arg\max_{A}\int_{\bm \sigma^2} q(\bm \sigma^2|\vX, \Theta_q)\log p(\vX, \bm \sigma^2|A)\diff \bm \sigma^2
    \end{split}$}
\end{equation}
{We use $t$ as the notation for the iteration index of our proposed algorithm.} We chose {$\Theta_q^t$ to be $A^{t-1}$, i.e., set $\Theta_q$ in the current iteration with $A$ from the previous iteration,} $q(\bm \sigma^2|\vX, \Theta_q^t) = p(\bm \sigma^2|\vX, A^{t-1})$. This selection of $q$ has been proven to be a tight lower bound of $p$. To simplify the learning procedure, we obtain the optimal value of the $\bm \sigma^2$, denoted as $\hat{\bm \sigma}^2$, via maximizing the $p(\bm \sigma^2|\bm X, A^{t-1})$. Phase-I and Phase-II can be performed as follows.
\begin{equation}\label{eq_Estep}
    \text{Phase-I}: \qquad   \hat{\bm \sigma}^2 = \arg\max_{\bm \sigma^2} p(\bm \sigma^2|\vX, A^{t-1})
\end{equation}
\begin{equation}\label{eq_Mstep}
    \text{Phase-II}: \qquad    A^* = \arg\max_{A} \log p(\vX, \hat{\bm \sigma}^2|A)
\end{equation}
In Phase-I, to obtain the posterior distribution $p(\bm \sigma^2|\vX, A^{t-1})$, we assume there exists a non-informative uniform prior $p(\bm\sigma^2)$\footnote{{We choose a non-information prior for $p(\bm \sigma^2)$, which is the least restrictive so that we can simplify the objective into mere likelihood. Our formulation can also adapt to other types of the prior distribution.} }. Then the posterior distribution is proportional to the likelihood of the marginal distribution $p(\vX|\bm \sigma^2, A^{t-1})$, i.e., $p(\bm \sigma^2|\vX, A^{t-1}) \propto p(\vX|\bm \sigma^2, A^{t-1})$. The optimal estimation of variances can be obtained by maximizing the likelihood of the marginal distribution $p(\vX|\bm \sigma^2, A^{t-1})$, or minimizing its log-likelihood, i.e., the NLL loss in Eq.~(\ref{eq_nll}) with $A = A^{t-1}$. Given $\vX$, the values of $\bm \sigma^2$ depend on the parameters in $B$ that are not shared with $A$. The optimization in Eq.~(\ref{eq_Estep}) can be simplified to set $W^{(1)} = (W^{(1)})^{t-1}, W^{(2)} = (W^{(2)})^{t-1}$  and optimize $W^{(3)}$ over $\mathcal{L}_{nll}$:
\begin{equation}\label{eq_Estep_final}
\scalebox{0.85}{$
\begin{split}
    (W^{(3)})^* &= \arg\min_{W^{(3)}} \mathcal{L}_{nll}(\vX, (W^{(1)})^{t-1}, (W^{(2)})^{t-1}, W^{(3)}) \\
    \hat{\sigma}_n^2(m) &= \sigma_n(\vX(m), (W^{(1)})^{t-1}, (W^{(3)})^*)\\
    n &= 1, 2, \cdots, N, m = 1, 2, \cdots, M
\end{split}
    $}
\end{equation}
In Phase II, we directly maximize the likelihood given the optimal
estimation of variances, or in practice minimize the NLL loss in Eq.~(\ref{eq_nll}) given $\bm \sigma^2 = \hat{\bm \sigma}^2$. The optimization in Eq.~(\ref{eq_Mstep}) can be simplified to optimize $W^{(1)}, W^{(2)}$ in $A$ over $\mathcal{L}_{nll}$ with fixed values for variances and subject to the acyclicity constraint on $W^{(1)}$:
\begin{equation}
    \scalebox{0.85}{$
    \begin{split}
        (W^{(1)})^*, (W^{(2)})^* &= \arg\min \mathcal{L}_{nll}(\vX, W^{(1)}, W^{(2)}, \hat{\bm \sigma}^2) \\
        &\text{subject to }h\big(W^{(1)}\big) = 0
    \end{split}$}
\end{equation}
We choose to only update $W^{(3)}$ in Phase-I to prevent poor empirical performance caused by the joint optimization over two competing terms in our loss. If $W^{(1)}$ is jointly optimized in Phase-I, we will obtain a degenerate solution with large reconstruction loss and larger unreasonable variances. To solve the constrained continuous optimization problem in Phase II, we adopt a standard Lagrangian optimization process and force $W^{(1)}$ to satisfy the acyclicity constraint (Algorithm 3). The augmented Lagrangian optimization method is generally accepted as the better method, compared to the alternative penalty method~\cite{ng2022convergence}. We choose ALM for a fair comparison since it has been employed by many state-of-the-art methods that tackle the same issue as our method. 
We outline the full procedure (Algorithm \ref{algorithm1}), Phase I procedure (Algorithm \ref{algorithm2}), Phase II procedure (Algorithm \ref{algorithm3}) 
in Appendix \ref{pseudo-code}.

\noindent\textbf{Convergence analysis:} Our proposed two-phase iterative learning approach can only guarantee a stationary solution, i.e., the gradients of parameters $A$ and $B$ w.r.t our training objective can achieve zeros after the algorithm converges. Please refer to Appendix \ref{sec_EM_Convergence_gaurantee} for details.

\noindent\textbf{Complexity analysis:} In Phase II, the time complexity is $\mathcal{O}(N^3)$ w.r.t number of nodes $N$, which takes the same number of optimization iterations as other continuous methods with Augmented Lagrangian Method (ALM)~\citep{zheng2020learning, lachapelle2019gradient}. Phase I is relatively much cheaper in computation. The time complexity is $\mathcal{O}(mN^2)$ as one iteration of LBFGS with memory size $m$ is employed. The total time complexity of our algorithm is $\mathcal{O}(kN^3)$ with $k$ iterations of two phases ($k\leq 5$ in practice).
Our proposed method has the same order of magnitude as the other baseline methods, and can
handle the same amount of variables.

\section*{Experiment}
We perform experiments on real data and synthetic data to demonstrate the effectiveness of our proposed method. We denoted our method as \textbf{I}dentifiable \textbf{C}ausal \textbf{D}iscovery under \textbf{H}eteroscedastic data(ICDH). For more details on synthetic data generation procedure and evaluation metrics, please refer to the Appendix \ref{sec_data_description} and \ref{eval_metrics} respectively.

\noindent\textbf{Baselines. }We compare our method against DAG learning methods using continuous optimization that also relaxes the strong assumptions of SEM: GOLEM-NV-L1~\citep{ng2020role}, GOLEM-EV-L1~\citep{ng2020role}, GraN-DAG~\citep{lachapelle2019gradient}, GraN-DAG++~\citep{lachapelle2019gradient}; the methods also address the heteroscedastic noise issue but under combinatorial optimization framework: HEC~\citep{xu2022causal} and CAREFL~\citep{khemakhem2021causal}, GFBS~\citep{rajendran2021structure} and HOST~\citep{duong2023heteroscedastic}; popular baselines NOTEARS-MLP~\citep{zheng2020learning}, CAM~\citep{peters2014causal}, LiNGAM~\citep{shimizu2014lingam}, and GES~\citep{chickering2002optimal}. 
\citet{xu2022causal, khemakhem2021causal} aim to learn pairwise causal relations instead of global graph structures. Hence we can only show the comparison on the cause-effect pairs dataset. 
\begin{table}[hpt]
\centering
\small 
    \begin{tabular}{c||c|c|c}
    \toprule
    Metrics & auSHDC$\downarrow$ &  SHD$\downarrow$ & auPRC$\uparrow$  \\ 
    \hline
    NOTEARS-MLP & 21.95 & 15 & 0.3427 \\
    GOLEM-EV & 25.41 & 17 & 0.1697\\
    GOLEM-NV & 26.53 & 14 & 0.2524\\
    GraN-DAG & - & $\bf{13}$ 
    & -\\
    GraN-DAG++ & - & $\bf{13}$
    & - \\
    GFBS & - & $17$
    & - \\
    HOST & - & $\bf{13}$ & - \\
    \hline
    \rowcolor{Gray}
    ICDH (ours)& $\bf{19.27}$  & $\bf{13}$ & $\bf{0.4673}$\\
    \bottomrule
  \end{tabular}
\caption{\normalsize Comparison of SoTA methods on Sachs dataset. 
 }
\label{tb:real}
\vspace{-5 mm}
\end{table}

Our method is not designed for heterogeneous and scale-invariant data. Tasks and assumptions in methods for heterogeneous data~\citep{huang2020causal, zhou2022causal} and scale-invariant data~\citep{reisach2021beware} differ from ours, making comparisons unfair on synthetic data tailored to our problems. Heteroscedastic noise may lead these methods to misestimate marginal variance and identify the wrong causal order. For a thorough comparison, we experiment with CD-NOD and sortnregress on heteroscedastic noise data (Table \ref{tab:comprehensive} from Appendix \ref{comprehnsive_comparision}), demonstrating our method's superior performance. Our focus is on developing a general algorithm under heterogeneous noise models for static data. Thus, we refrain from comparing with methods for temporal causal relations or those using complex noise distributions without explicitly modeling noise variance variation, as they are not relevant to this paper. 

\subsection*{Empirical results on synthetic data}
We generate synthetic data with different types of additive noises: homoscedastic noise with equal noise variances across variables and heteroscedastic noise. We also generated and experimented on homoscedastic noise with unequal noise variances across variables.

For each type of synthetic data, we compared different baselines based on the matchness between model formulations and data assumptions. The empirical results on homoscedastic equal noise data and heteroscedastic data are shown in Figure \ref{fig:syn_res}. Compared to the other SCM-based methods under a continuous optimization framework, empirical results indicate that our method can achieve comparable accuracy on homoscedastic noise data while outperforming baselines on heteroscedastic noise data. 
Compared to other types of methods, our method outperforms CAM, LiNGAM, and GFBS. Compared to GES and HOST, our method achieves comparable accuracy on data generated by sparse graphs and better performance on data generated by dense graphs. We also applied our method on the larger dataset with $50$ variables. On ER1 graphs, our ICDH method achieves SHD of $134.5\pm23.4$, outperform NOTEARS-MLP($144.1\pm38.0$), GraN-DAG++ ($161.1\pm30.8$), and HOST($152.5\pm24.8$). The effectiveness of our method on dense graphs can be verified by empirical results on $ER3$ graphs in Table \ref{tab:dense_graph} from Appendix \ref{dense_graphs}.
\vspace{-3 mm}
\subsection*{Empirical results on real data}
The empirical results on synthetic data, no matter homoscedastic or heteroscedastic, only indicate that the algorithms tend to perform well on the data that satisfies their model assumptions. These model assumptions are usually violated in real data or applications. Hence, a general formulation and an empirically effective learning approach are essential to solve real-world problems. We apply our method and the baseline methods on the two widely-studied real datasets: Sachs and cause-effect pairs. 
\begin{table}[hpt]
\centering
\small 
\begin{tabular}{c||c|c}
    \toprule
    Methods & 
    Accuracy $\uparrow$
    & Weighted Accuracy $\uparrow$ \\ 
    \hline
    NOTEARS-MLP & 39/99 & 0.49 \\
    NOTEARS & 36/99 & 0.47\\
    GOLEM-EV & 33/99 & 0.40\\
    GOLEM-NV & 33/99 & 0.40\\
    \rowcolor{Gray}
    ICDH(ours) & 52/99 & 0.58\\
    \hline
    HEC & - & 0.71 \tablefootnote{Reported results from \cite{xu2022causal}}\\
    CAREFL  & - & 0.73 \tablefootnote{Reported results from \cite{khemakhem2021causal}} \\ 
    \bottomrule
  \end{tabular}
  \caption{\normalsize Comparison of SoTA methods on cause-effect pairs dataset: results on 
 accuracy (number of correct inferences of cause-effect relations) and the weighted accuracy.}
  \label{tb:cep}
  \vspace{-5 mm}
\end{table}

\noindent\textbf{Sachs Dataset. } 
The results are summarized in Table~\ref{tb:real}. Our SHD of 16 for NOTEARS-MLP closely aligns with and is lower than the SHD of 17 reported in their paper. GraN-DAG, GraN-DAG++, GFBS, and HOST use post-processing to find the optimal DAG with minimal SHD. We achieve SHDs of 13 for GraN-DAG, GraN-DAG++, and HOST, consistent with their original papers. For the GFBS method, we achieve an SHD of 17. Empirical results demonstrate that our proposed method attains comparable accuracy (SHD of $13$) with state-of-the-art methods.

\noindent\textbf{Cause-effect pairs dataset. } 
Following standard experimental procedures, we focus on the 99 remaining bivariate problems, as summarized in Table \ref{tb:cep}. Our method correctly infers 52 out of 99 cause-effect pairs, outperforming all the other whole DAG learning methods: NOTEARS-MLP, NOTEARS, GOLEM-EV, and GOLEM-NV, which correctly identify 39, 36, 33, and 33 pairs, respectively. Our method achieves a lower weighted accuracy compared to HEC and CAREFL. Despite similar model assumptions, these methods are tailored for bivariate data, directly comparing models $X \leftarrow Y$ and $X \rightarrow Y$ to select the one with a higher proposed objective value. Our whole DAG learning method, relying on continuous optimization, may not find the global optimal objective. Furthermore, empirical results in Tables \ref{tb:real}-\ref{tb:cep} suggest real data likely involves heteroscedastic variables with varying noise variances across samples. Our DAG learning method, with a general model formulation and effective learning approach, proves more suitable for real-world data applications.
\vspace{-3 mm}
\section*{Conclusion}
In this paper, we introduce relaxed implementable sufficient conditions to provide the identifiability for a general class of multivariate SEM. We propose a novel formulation for the DAG learning problem guided by the conditions, which accounts for the noise variance variation across both variables and observations. Our formulation is identifiable and can generalize existing formulations of state-of-art methods. We then propose an effective two-phase iterative DAG learning approach to address the increasing training difficulties introduced by the general formulation. 
Empirical results show that our method achieves comparable accuracy on homoscedastic noise data while outperforming the SOTA methods on heteroscedastic noise data and real data, which indicates 1) the existing methods likely suffer when noise variances vary across observations, 2) our method has great potential for real-world applications.

\noindent\textbf{Acknowledgement:} This work is supported in part by the Rensselaer-IBM AI Research Collaboration (http://airc.rpi.edu), part of the IBM AI Horizons Network,
and by the National Science Foundation award IIS 2236026.

\bibliography{aaai24}

\begin{thebibliography}{40}
\providecommand{\natexlab}[1]{#1}

\bibitem[{Ahuja et~al.(2021)Ahuja, Caballero, Zhang, Bengio, Mitliagkas, and
  Rish}]{ahuja2021invariance}
Ahuja, K.; Caballero, E.; Zhang, D.; Bengio, Y.; Mitliagkas, I.; and Rish, I.
  2021.
\newblock Invariance Principle Meets Information Bottleneck for
  Out-of-Distribution Generalization.
\newblock \emph{arXiv preprint arXiv:2106.06607}.

\bibitem[{Bl{\"o}baum et~al.(2018)Bl{\"o}baum, Janzing, Washio, Shimizu, and
  Sch{\"o}lkopf}]{blobaum2018cause}
Bl{\"o}baum, P.; Janzing, D.; Washio, T.; Shimizu, S.; and Sch{\"o}lkopf, B.
  2018.
\newblock Cause-effect inference by comparing regression errors.
\newblock In \emph{International Conference on Artificial Intelligence and
  Statistics}, 900--909. PMLR.

\bibitem[{Chen, Drton, and Wang(2019)}]{chen2019causal}
Chen, W.; Drton, M.; and Wang, Y.~S. 2019.
\newblock On causal discovery with an equal-variance assumption.
\newblock \emph{Biometrika}, 106(4): 973--980.

\bibitem[{Chickering(2002)}]{chickering2002optimal}
Chickering, D.~M. 2002.
\newblock Optimal structure identification with greedy search.
\newblock \emph{Journal of machine learning research}, 3(Nov): 507--554.

\bibitem[{Duong and Nguyen(2023)}]{duong2023heteroscedastic}
Duong, B.; and Nguyen, T. 2023.
\newblock Heteroscedastic Causal Structure Learning.
\newblock \emph{arXiv preprint arXiv:2307.07973}.

\bibitem[{Gao, Ding, and Aragam(2020)}]{gao2020polynomial}
Gao, M.; Ding, Y.; and Aragam, B. 2020.
\newblock A polynomial-time algorithm for learning nonparametric causal graphs.
\newblock \emph{arXiv preprint arXiv:2006.11970}.

\bibitem[{Huang et~al.(2020)Huang, Zhang, Zhang, Ramsey, Sanchez-Romero,
  Glymour, and Sch{\"o}lkopf}]{huang2020causal}
Huang, B.; Zhang, K.; Zhang, J.; Ramsey, J.~D.; Sanchez-Romero, R.; Glymour,
  C.; and Sch{\"o}lkopf, B. 2020.
\newblock Causal Discovery from Heterogeneous/Nonstationary Data.
\newblock \emph{J. Mach. Learn. Res.}, 21(89): 1--53.

\bibitem[{Immer et~al.(2022)Immer, Schultheiss, Vogt, Sch{\"o}lkopf,
  B{\"u}hlmann, and Marx}]{immer2022identifiability}
Immer, A.; Schultheiss, C.; Vogt, J.~E.; Sch{\"o}lkopf, B.; B{\"u}hlmann, P.;
  and Marx, A. 2022.
\newblock On the Identifiability and Estimation of Causal Location-Scale Noise
  Models.
\newblock \emph{arXiv preprint arXiv:2210.09054}.

\bibitem[{Janzing and Sch{\"o}lkopf(2018)}]{janzing2018detecting}
Janzing, D.; and Sch{\"o}lkopf, B. 2018.
\newblock Detecting non-causal artifacts in multivariate linear regression
  models.
\newblock In \emph{International Conference on Machine Learning}, 2245--2253.
  PMLR.

\bibitem[{Javidian, Pandey, and Jamshidi()}]{javidianscalable}
Javidian, M.~A.; Pandey, O.; and Jamshidi, P. ????
\newblock Scalable Causal Domain Adaptation.

\bibitem[{Kalainathan et~al.(2018)Kalainathan, Goudet, Guyon, Lopez-Paz, and
  Sebag}]{kalainathan2018sam}
Kalainathan, D.; Goudet, O.; Guyon, I.; Lopez-Paz, D.; and Sebag, M. 2018.
\newblock Sam: Structural agnostic model, causal discovery and penalized
  adversarial learning.

\bibitem[{Khemakhem et~al.(2021)Khemakhem, Monti, Leech, and
  Hyvarinen}]{khemakhem2021causal}
Khemakhem, I.; Monti, R.; Leech, R.; and Hyvarinen, A. 2021.
\newblock Causal autoregressive flows.
\newblock In \emph{International conference on artificial intelligence and
  statistics}, 3520--3528. PMLR.

\bibitem[{Lachapelle et~al.(2019)Lachapelle, Brouillard, Deleu, and
  Lacoste-Julien}]{lachapelle2019gradient}
Lachapelle, S.; Brouillard, P.; Deleu, T.; and Lacoste-Julien, S. 2019.
\newblock Gradient-based neural dag learning.
\newblock \emph{arXiv preprint arXiv:1906.02226}.

\bibitem[{Ng et~al.(2019)Ng, Fang, Zhu, Chen, and Wang}]{ng2019masked}
Ng, I.; Fang, Z.; Zhu, S.; Chen, Z.; and Wang, J. 2019.
\newblock Masked gradient-based causal structure learning.
\newblock \emph{arXiv preprint arXiv:1910.08527}.

\bibitem[{Ng, Ghassami, and Zhang(2020)}]{ng2020role}
Ng, I.; Ghassami, A.; and Zhang, K. 2020.
\newblock On the Role of Sparsity and DAG Constraints for Learning Linear DAGs.
\newblock \emph{Advances in Neural Information Processing Systems}, 33.

\bibitem[{Ng et~al.(2022)Ng, Lachapelle, Ke, Lacoste-Julien, and
  Zhang}]{ng2022convergence}
Ng, I.; Lachapelle, S.; Ke, N.~R.; Lacoste-Julien, S.; and Zhang, K. 2022.
\newblock On the convergence of continuous constrained optimization for
  structure learning.
\newblock In \emph{International Conference on Artificial Intelligence and
  Statistics}, 8176--8198. PMLR.

\bibitem[{Ott, Imoto, and Miyano(2004)}]{ott2004finding}
Ott, S.; Imoto, S.; and Miyano, S. 2004.
\newblock Finding optimal models for small gene networks.
\newblock In \emph{Pacific symposium on biocomputing}.

\bibitem[{Park(2020)}]{park2020identifiability}
Park, G. 2020.
\newblock Identifiability of Additive Noise Models Using Conditional Variances.
\newblock \emph{J. Mach. Learn. Res.}, 21(75): 1--34.

\bibitem[{Peters, Janzing, and Scholkopf(2011)}]{peters2011causal}
Peters, J.; Janzing, D.; and Scholkopf, B. 2011.
\newblock Causal inference on discrete data using additive noise models.
\newblock \emph{IEEE Transactions on Pattern Analysis and Machine
  Intelligence}, 33(12): 2436--2450.

\bibitem[{Peters et~al.(2014)Peters, Mooij, Janzing, and
  Sch{\"o}lkopf}]{peters2014causal}
Peters, J.; Mooij, J.~M.; Janzing, D.; and Sch{\"o}lkopf, B. 2014.
\newblock Causal discovery with continuous additive noise models.
\newblock \emph{The Journal of Machine Learning Research}, 15(1): 2009--2053.

\bibitem[{Rajendran et~al.(2021)Rajendran, Kivva, Gao, and
  Aragam}]{rajendran2021structure}
Rajendran, G.; Kivva, B.; Gao, M.; and Aragam, B. 2021.
\newblock Structure learning in polynomial time: Greedy algorithms, Bregman
  information, and exponential families.
\newblock \emph{Advances in Neural Information Processing Systems}, 34:
  18660--18672.

\bibitem[{Reisach, Seiler, and Weichwald(2021)}]{reisach2021beware}
Reisach, A.; Seiler, C.; and Weichwald, S. 2021.
\newblock Beware of the simulated dag! causal discovery benchmarks may be easy
  to game.
\newblock \emph{Advances in Neural Information Processing Systems}, 34:
  27772--27784.

\bibitem[{Sachs et~al.(2005)Sachs, Perez, Peer, Lauffenburger, and
  Nolan}]{sachs2005causal}
Sachs, K.; Perez, O.; Peer, D.; Lauffenburger, D.~A.; and Nolan, G.~P. 2005.
\newblock Causal protein-signaling networks derived from multiparameter
  single-cell data.
\newblock \emph{Science}, 308(5721): 523--529.

\bibitem[{Sch{\"o}lkopf(2019)}]{scholkopf2019causality}
Sch{\"o}lkopf, B. 2019.
\newblock Causality for machine learning.
\newblock \emph{arXiv preprint arXiv:1911.10500}.

\bibitem[{Sgouritsa et~al.(2015)Sgouritsa, Janzing, Hennig, and
  Sch{\"o}lkopf}]{sgouritsa2015inference}
Sgouritsa, E.; Janzing, D.; Hennig, P.; and Sch{\"o}lkopf, B. 2015.
\newblock Inference of cause and effect with unsupervised inverse regression.
\newblock In \emph{Artificial intelligence and statistics}, 847--855. PMLR.

\bibitem[{Shen et~al.(2018)Shen, Cui, Kuang, Li, and Chen}]{shen2018causally}
Shen, Z.; Cui, P.; Kuang, K.; Li, B.; and Chen, P. 2018.
\newblock Causally regularized learning with agnostic data selection bias.
\newblock In \emph{Proceedings of the 26th ACM international conference on
  Multimedia}, 411--419.

\bibitem[{Shimizu(2014)}]{shimizu2014lingam}
Shimizu, S. 2014.
\newblock {LiNGAM}: Non-{Gaussian} methods for estimating causal structures.
\newblock \emph{Behaviormetrika}, 41(1): 65--98.

\bibitem[{Spirtes et~al.(2000)Spirtes, Glymour, Scheines, Heckerman, Meek,
  Cooper, and Richardson}]{spirtes2000causation}
Spirtes, P.; Glymour, C.~N.; Scheines, R.; Heckerman, D.; Meek, C.; Cooper, G.;
  and Richardson, T. 2000.
\newblock \emph{Causation, prediction, and search}.
\newblock MIT press.

\bibitem[{Spirtes, Meek, and Richardson(1995)}]{spirtes1995causal}
Spirtes, P.; Meek, C.; and Richardson, T. 1995.
\newblock Causal inference in the presence of latent variables and selection
  bias.
\newblock In \emph{UAI}.

\bibitem[{Stojanov et~al.(2021)Stojanov, Li, Gong, Cai, Carbonell, and
  Zhang}]{stojanov2021domain}
Stojanov, P.; Li, Z.; Gong, M.; Cai, R.; Carbonell, J.; and Zhang, K. 2021.
\newblock Domain Adaptation with Invariant Representation Learning: What
  Transformations to Learn?
\newblock \emph{Advances in Neural Information Processing Systems}, 34.

\bibitem[{Wu(1983)}]{wu1983convergence}
Wu, C.~J. 1983.
\newblock On the convergence properties of the EM algorithm.
\newblock \emph{The Annals of statistics}, 95--103.

\bibitem[{Xu et~al.(2022)Xu, Marx, Mian, and Vreeken}]{xu2022causal}
Xu, S.; Marx, A.; Mian, O.; and Vreeken, J. 2022.
\newblock Causal Inference with Heteroscedastic Noise Models.

\bibitem[{Yu et~al.(2019)Yu, Chen, Gao, and Yu}]{yu2019dag}
Yu, Y.; Chen, J.; Gao, T.; and Yu, M. 2019.
\newblock DAG-GNN: DAG Structure Learning with Graph Neural Networks.
\newblock \emph{arXiv preprint arXiv:1904.10098}.

\bibitem[{Yu and Gao(2020)}]{yu2020dags}
Yu, Y.; and Gao, T. 2020.
\newblock DAGs with No Curl: Efficient DAG Structure Learning.
\newblock \emph{Advances in Neural Information Processing Systems (NeurIPS)
  Workshop on Causal Discovery and Causality-Inspired Machine Learning}.

\bibitem[{Zhang(2008)}]{zhang2008completeness}
Zhang, J. 2008.
\newblock On the completeness of orientation rules for causal discovery in the
  presence of latent confounders and selection bias.
\newblock \emph{Artificial Intelligence}, 172(16-17): 1873--1896.

\bibitem[{Zhang and Hyvarinen(2012)}]{zhang2012identifiability}
Zhang, K.; and Hyvarinen, A. 2012.
\newblock On the identifiability of the post-nonlinear causal model.
\newblock \emph{arXiv preprint arXiv:1205.2599}.

\bibitem[{Zhang, Zhang, and Sch{\"o}lkopf(2015)}]{zhang2015distinguishing}
Zhang, K.; Zhang, J.; and Sch{\"o}lkopf, B. 2015.
\newblock Distinguishing cause from effect based on exogeneity.
\newblock \emph{arXiv preprint arXiv:1504.05651}.

\bibitem[{Zheng et~al.(2018)Zheng, Aragam, Ravikumar, and Xing}]{zheng2018dags}
Zheng, X.; Aragam, B.; Ravikumar, P.~K.; and Xing, E.~P. 2018.
\newblock DAGs with NO TEARS: Continuous Optimization for Structure Learning.
\newblock In \emph{Advances in Neural Information Processing Systems},
  9472--9483.

\bibitem[{Zheng et~al.(2020)Zheng, Dan, Aragam, Ravikumar, and
  Xing}]{zheng2020learning}
Zheng, X.; Dan, C.; Aragam, B.; Ravikumar, P.; and Xing, E.~P. 2020.
\newblock {Learning sparse nonparametric DAGs}.
\newblock In \emph{International Conference on Artificial Intelligence and
  Statistics}.

\bibitem[{Zhou, He, and Ni(2022)}]{zhou2022causal}
Zhou, F.; He, K.; and Ni, Y. 2022.
\newblock Causal Discovery with Heterogeneous Observational Data.
\newblock \emph{arXiv preprint arXiv:2201.12392}.

\end{thebibliography}

\newpage
\appendix
\section{The Derivation of the NLL Loss under the Proposed Formulation}\label{nll_derivation}
In this section, we provide the detailed derivation of the NLL loss in 
Eq. (10) and Eq. (11) 
in the main paper. 
{
\begin{equation}\label{eq_nll_detailed}
\scalebox{0.85}{$
    \begin{split}
        & \mathcal{L}_{nll} (\bm X, A, B)\\
        &=-\log p(\vX) \\
        &= -\log \prod_{m=1}^M p(\vX(m))\\
        &= -\log \prod_{m=1}^M \prod_{n=1}^N p(\vX_n(m)| \vX_{\pi_n}(m)) \\
        &= -\log \prod_{m=1}^M \prod_{n=1}^N \frac{1}{\sigma_n \big(\vX(m), B_n\big)\sqrt{2\pi}}e^{-\frac{\big(\vX_n(m) - f_n(\vX(m), A_n)\big)^2}{2\sigma_n^2 \big(\vX(m), B_n\big)}} \\
        &= \sum_{m=1}^M\sum_{n=1}^N\Big[\log\Big(\sigma_n \big(\vX(m), B_n\big)\sqrt{2\pi}\Big) + \frac{\big(\vX_n(m) - f_n(\vX(m), A_n)\big)^2}{2\sigma_n^2 \big(\vX(m), B_n\big)}\Big] \\
        &=\sum_{m=1}^M\sum_{n=1}^N\Big[\log \sqrt{2\pi} + \log [\text{ReLU}\big(W_n^{(3)}s(W_n^{(1)}\vX^T(m))\big)  + e^{W_{n0}^{(3)}}] \\
        &+ \frac{\Big(\vX_n(m) - W_n^{(2)}s\big(W_n^{(1)}\vX^T(m)\big)\Big)^2}{ 2\Big(\text{ReLU}\big(W_n^{(3)}s(W_n^{(1)}\vX^T(m))\big) + e^{W_{n0}^{(3)}}\Big)^2 }\Big]
    \end{split}
    $}
\end{equation}}
where $s(\cdot)$ is the sigmoid function.
If we adopt the simplified notation, Eq.~\eqref{eq_nll_detailed} can also be written as $\mathcal{L}_{nll} (\bm X, A, B) = \sum_{m=1}^M\sum_{n=1}^N[\log(\sigma_n(m)\sqrt{2\pi}) + \frac{(X_n(m) - f_n(\vX(m), A_n))^2}{2\sigma_n^2(m)}]$. Hence we also denote the our loss function as $\mathcal{L}_{nll}(\bm X, A, \bm \sigma^2)$ in the following sections.

\section{The Property of Proposed Two-phase Algorithm}
\subsection{Marginal Log-likelihood}
Our novel formulation based on the general form of SEM in \textbf{Definition 1} introduces the modeling and estimation of noise variances $\bm \sigma^2$, which inevitably increases the optimization difficulties. Not only there are additional variances $\bm \sigma^2$ to learn, but the variances $\bm \sigma^2$ can compromise the accuracy of structural parameters $A$ during training. The interplay between $A$ and $\bm \sigma^2$ can lead the algorithm to converge to unexpected stationary solutions. To alleviate such an issue, we employ a two-phase optimization procedure, where we treat the noise variances $\bm \sigma^2$ as the unknown variables that need to be estimated simultaneously. The detailed derivations are shown in Eq.~\eqref{eq_EM}. 
\begin{equation}\label{eq_EM}
    \scalebox{0.8}{$
    \begin{split}
        \log p(\vX|A) &= \log \int_{\bm {\sigma}^2} p(\vX, \bm \sigma^2|A)\diff \bm \sigma^2 \\
        &= \log \int_{\bm \sigma^2} q(\bm \sigma^2|\vX, \Theta_q)\frac{p(\vX, \bm \sigma^2|A)}{q(\bm \sigma^2|\vX, \Theta_q)} \diff \bm \sigma^2 \\
        &\geq \int_{\bm \sigma^2} q(\bm\sigma^2|\vX, \Theta_q)\log \frac{p(\vX, \bm \sigma^2|A)}{q(\bm \sigma^2|\vX, \Theta_q)}\diff \bm \sigma^2 
    \end{split}
    $}
\end{equation}
\subsection{Convergence Guarantee}\label{sec_EM_Convergence_gaurantee}
Our algorithm can only guarantee achieving a solution with derivatives of the likelihood being arbitrarily close to zero, i.e., the solution is a stationary point. We can view our Phase-I step (Eq. (15)
in the main paper) as an attempt to construct a function $Q(A|A^{t-1}) = \log p(\vX|\hat{\bm \sigma}^2, A)$ by finding the optimal values $\hat{\bm \sigma}^2$. Then Phase-II step can be seen as choosing the $A^t$ to be any value in the set of $A$, denoted as $\Omega$, which maximizes $Q(A|A^{t-1})$. Assume the optimal value of $A$ in $t^{th}$ iteration is selected from $\mathcal{M}(A^{t-1})$, where $\mathcal{M}(\cdot)$ a point-to-set map such that
\[Q(A'|A^{t-1}) \geq Q(A^{t-1}|A^{t-1}) \forall A'\in \mathcal{M}(A^{t-1})\]

We define the log-likelihood in Eq.~(\ref{eq_EM}) as $L(A)$, i.e., $L(A) = \log p(\bm X|A) = -\mathcal{L}_{nll}(\vX, A, \bm \sigma^2)$. According to the Theorem 1 in \citep{wu1983convergence}, if 1) $\mathcal{M}$ is a closed point-to-set map in the complement of $\mathcal{F}$, which is a set of stationary points in the interior of $\Omega$, 2) $L(A^t)>L(A^{t-1})$ for all $A^{t-1}\notin \mathcal{F}$, then the limit points of $\{A^t\}_{t=1}^{T}$ are stationary points of $L$ and $L(A^t)$ converges monotonically to $L^* = L(A^*)$ for a stationary point $A^*$. \citet{wu1983convergence} also gives a sufficient condition for the closedness of $\mathcal{M}$: $Q(\psi|\phi)$ is continuous in both $\psi$ and $\phi$. Such a condition is easily satisfied since our $Q$ is continuous in $A$. For 2), proving $L(A^t) > L(A^{t-1})$ is equivalent to prove $\mathcal{L}_{nll}(\vX, A^{t-1}, \big(\bm \sigma^2\big)^{t-1}) > \mathcal{L}_{nll}(\vX, A^{t}, \big(\bm \sigma^2\big)^{t}) $. 
$\big(\bm \sigma^2\big)^t$ is obtained by finding the variance parameters $B^t$ through minimizing $\mathcal{L}_{nll}(\vX, A^{t-1}, B)$, hence we have
\begin{equation}\label{eq_ineq1}
    \mathcal{L}_{nll}(\vX, A^{t-1}, \big(\bm \sigma^2\big)^t) \leq \mathcal{L}_{nll}(\vX, A^{t-1}, \big(\bm \sigma^2\big)^{t-1})
\end{equation}
$A^t$ is obtained by minimizing $\mathcal{L}_{nll}(\vX, A, \big(\bm \sigma^2\big)^t)$, and we have
\begin{equation}\label{eq_ineq2}
    \mathcal{L}_{nll}(\vX, A^t, \big(\bm \sigma^2\big)^t) \leq \mathcal{L}_{nll}(\vX, A^{t-1}, \big(\bm \sigma^2\big)^t)
\end{equation}
Combine Eq.~(\ref{eq_ineq1}) and Eq.~(\ref{eq_ineq2}), we have
\begin{equation}
    \begin{split}
        \mathcal{L}_{nll}(\vX, A^{t-1}, \big(\bm \sigma^2\big)^{t-1}) &\geq \mathcal{L}_{nll}(\vX, A^{t-1}, \big(\bm \sigma^2\big)^t)\\
        &\geq \mathcal{L}_{nll}(\vX, A^t, \big(\bm \sigma^2\big)^t)
    \end{split}
\end{equation}

Note that the equalities can not be satisfied simultaneously, otherwise, the algorithm converges at iteration $t-1$ and $A^{t-1} \in \mathcal{F}$. Therefore, 
\begin{equation}
    \mathcal{L}_{nll}(\vX, A^{t-1}, \big(\bm \sigma^2\big)^{t-1}) > \mathcal{L}_{nll}(\vX, A^t, \big(\bm \sigma^2\big)^t)
\end{equation}
Since our proposed algorithm satisfies both 1) and 2), it can converge monotonically to an optimal $L^*$ with a stationary point $A^*$. 

\section{Identifiable multivariate HNM}\label{proof_identifiability}
\textbf{Theorem 3.2} in the main paper provides the relaxing and implementable sufficient conditions for multivariate HNM defined in \textbf{Definition 3.1}. We provide a sketch for the proof of \textbf{Theorem 3.2} and separate the proof into two components: (1) For bivariate case on variables $X_1$ and $X_2$, identify the assumptions that must hold so the HNMs for causal relations $X_1 \leftarrow X_2$ and $X_1 \rightarrow X_2$ both satisfy the given distribution $p(X_1, X_2)$. \textbf{We then identify the sufficient conditions that violate the assumptions from (1). If those conditions are satisfied, then only the HNM for the true causal direction satisfies the data distribution. Then HNM is identifiable subject to those conditions for the bivariate cases.}  (2) We follow the standard approach to prove the identifiability for multivariate cases using the identifiability theorem for bivariate cases. 


First, we {propose \textbf{Lemma \ref{lem: ident_bivariate}} to prove the identifiability of the HNM for bivariate cases subject to certain conditions.
We aim to identify conditions that are easy to model and estimate in practice.
\begin{lemma}\label{lem: ident_bivariate}
    Assume a random set with two variables $X = (X_1, X_2)$ follows the HNM described by Eq. (5)
    , with $E_1, E_2$ be the independent exogenous noise variables for $X_1, X_2$. $E_1, E_2$ follow Gaussian distributions. If  functions $f_j, \sigma_j$ linking cause to effect satisfy 1) $f_j$ is nonlinear, and 2) $\sigma_j$ is a piece-wise function, then the HNM is identifiable.
\end{lemma}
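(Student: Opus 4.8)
The plan is to argue by contradiction using the standard ``reverse model'' strategy for nonlinear noise models. Writing the true (forward) model as $X_1 = f_1(X_2) + \sigma_1(X_2)E_1$ with $X_2$ a Gaussian root and $E_1\sim\mathcal{N}(0,1)$, the joint log-density factorizes as
\[
\ell(x_1,x_2) = \log p(x_2) - \log\sigma_1(x_2) - \frac{(x_1-f_1(x_2))^2}{2\sigma_1^2(x_2)} + \text{const},
\]
which, viewed as a function of $x_1$ for fixed $x_2$, is a quadratic with coefficients $a(x_2)=-\tfrac{1}{2\sigma_1^2(x_2)}$ and $b(x_2)=\tfrac{f_1(x_2)}{\sigma_1^2(x_2)}$, plus an $x_2$-only remainder $c(x_2)$. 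The first step is to suppose a competing backward HNM $X_2=\tilde f_2(X_1)+\tilde\sigma_2(X_1)\tilde E_2$ (with $X_1$ a Gaussian root) reproduces the same $p(x_1,x_2)$; this forces $\ell$ to also be quadratic in $x_2$ for each fixed $x_1$, say $\ell=\alpha(x_1)x_2^2+\beta(x_1)x_2+\gamma(x_1)$. The whole proof hinges on this ``quadratic in each variable separately'' structure, which is exactly the assumption I will show is untenable under the lemma's hypotheses.

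Next I would exploit the double-quadratic structure by matching mixed second derivatives. Equating the forward expression $\partial_{x_1}^2\ell = 2a(x_2)$ (a function of $x_2$ alone) with the backward expression $\alpha''(x_1)x_2^2+\beta''(x_1)x_2+\gamma''(x_1)$ forces $\alpha'',\beta'',\gamma''$ to be constants and, symmetrically, forces $a,b,c$ to be quadratic polynomials in $x_2$. In particular $\tfrac{1}{\sigma_1^2(x_2)}$ must be a single quadratic polynomial, so $\sigma_1$ is globally the reciprocal square root of one quadratic — an analytic form. The core of the argument is then to inject the Gaussian cause marginal: since $\log p(x_2)$ is quadratic while $c(x_2)$ contains the term $-\log\sigma_1(x_2)=\tfrac12\log(-2a(x_2))$ together with the rational term $\tfrac{b^2(x_2)}{4a(x_2)}$, demanding that $c$ be polynomial forces the logarithm of a polynomial to equal a rational function. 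That is impossible unless $a$ is constant, i.e.\ unless $\sigma_1$ is constant.

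The two hypotheses then close the argument on two fronts. Condition (2), that $\sigma_1$ is a genuine piece-wise function, contradicts the forced conclusion that $\sigma_1$ is a single constant (equivalently a single analytic reciprocal-of-quadratic), so no backward HNM can match the data whenever the forward noise is truly heteroscedastic. In the residual regime where $\sigma_1$ is constant, the model degenerates to a Gaussian additive noise model $X_1=f_1(X_2)+\sigma_1 E_1$, for which the classical Gaussian additive-noise identifiability result applies and non-identifiability requires $f_1$ to be linear (the jointly-Gaussian case); condition (1), nonlinearity of $f_1$, rules this out. Combining both regimes, no reverse HNM is consistent with $p(x_1,x_2)$, so the forward direction is uniquely determined and the bivariate HNM is identifiable.

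I expect the main obstacle to be the regularity bookkeeping around the piece-wise variance: the second-derivative manipulations that deliver the polynomial constraints presuppose smoothness, whereas a piece-wise $\sigma_1$ is non-smooth at its breakpoints. I would handle this by running the derivative argument on each smooth piece to conclude that $\tfrac{1}{\sigma_1^2}$ coincides with a quadratic polynomial there, and then using the incompatibility of distinct pieces (or the surviving logarithmic term) across breakpoints to derive the global contradiction, rather than assuming global smoothness from the outset.
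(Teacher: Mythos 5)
Your proposal is correct in its main thrust, but it takes a genuinely different route from the paper. The paper does not re-derive anything: it invokes Theorem 2 of \citet{khemakhem2021causal} as a black box, which states that if forward and backward heteroscedastic models both fit the same joint, then either (i) $\sigma=\tfrac1Q$, $f=\tfrac PQ$ with $Q$ a positive quadratic and both marginals strictly log-mix-rational-log, or (ii) $\sigma$ constant, $f$ linear, and both marginals Gaussian; hypotheses (2) and (1) of the lemma then rule out (i) and (ii) respectively. You instead reconstruct the structural dichotomy from scratch via the ``quadratic in each variable separately'' form of $\log p(x_1,x_2)$ and mixed-derivative matching, and---this is the substantive difference---you inject the Gaussianity of the cause marginal, which Definition~\ref{def:HNM} does supply (a root node is a constant plus scaled Gaussian noise). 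That extra ingredient buys a stronger conclusion: once $a,b,c$ are forced to be quadratic polynomials, the identity $c=\log p+\tfrac12\log(-2a)+\tfrac{b^2}{4a}+\mathrm{const}$ with $\log p$ quadratic forces $a$ constant (your log-vs-rational argument) and then $b^2$ of degree at most two, hence $b$ affine and $f_1$ affine; so in the Gaussian-root bivariate setting, nonlinearity of $f$ alone yields the contradiction and condition (2) is essentially superfluous. What the paper's marginal-agnostic route buys is reusability: in the proof of Theorem~\ref{theorem: identifiable}, the bivariate lemma is applied to the conditioned pair $(L^*,Y^*)=(L,Y)\mid \bm S=\bm s$, whose ``cause'' distribution is generally not Gaussian; your proof would not plug into that step, whereas the Khemakhem-based version does, precisely because it never uses the cause marginal and instead kills scenario (i) with the piece-wise $\sigma$ hypothesis.

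Two gaps need tidying, both fillable. First, in the ``residual regime where $\sigma_1$ is constant'' you appeal to the classical Gaussian additive-noise identifiability result, but that result excludes backward \emph{additive} models, while the backward model you must exclude is a priori heteroscedastic; you need to first force $\tilde\sigma_2$ to be constant, either by running the same Gaussian-marginal/log-vs-rational argument on $\gamma(x_1)$ (using the backward root's Gaussianity), or---cleaner---by the degree bound noted above, which never leaves your own framework. Second, your plan for the piece-wise regularity issue points in a slightly wrong direction: the repair is not ``incompatibility across breakpoints,'' because a single smooth piece already suffices. On one strip $\mathbb{R}\times I$ where the forward functions are smooth, the backward coefficients $\alpha,\beta,\gamma$ are recovered as $x_2$-derivatives of the forward form at a fixed $x_2^0\in I$, hence are genuine quadratic polynomials of $x_1$ on all of $\mathbb{R}$; re-expanding $\ell=\alpha(x_1)x_2^2+\beta(x_1)x_2+\gamma(x_1)$ in powers of $x_1$ then shows $a(x_2)=-\tfrac{1}{2\sigma_1^2(x_2)}$ coincides with one global quadratic polynomial for \emph{all} $x_2$, and the argument proceeds globally. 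Also note that on a bounded piece the asymptotic growth comparison is unavailable; replace it by analytic continuation: $e^{R}=Q$ on an interval with $R$ rational and $Q$ polynomial extends to $\mathbb{C}$ minus the poles of $R$, so any complex zero of $Q$ would have to be a pole of $R$, where $e^{R}$ has an essential singularity while $Q$ stays bounded---impossible, so $Q$ has no zeros and is constant.
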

\begin{proof}
For variables $X_1, X_2$, assume they follow the model
\begin{equation}\label{eq_bi_forward}
    X_2 = f_2(X_1) + \sigma_2(X_1)E_2
\end{equation}
where $E_2$ is a standard Gaussian distribution, $f_2, \sigma_2$ are twice-differentiable scalar functions and $\sigma_2(X_1)>0$. If a backward model exists, i,e, the data also follows the same model in the other direction, 
\begin{equation}\label{eq_bi_backward}
    X_1 = f_1(X_2)+\sigma_1(X_2)E_1
\end{equation}
where $E_1$ is a standard Gaussian distribution, $f_1, \sigma_1$ are twice-differentiable scalar functions and $\sigma_1(X_2)>0$. The assumptions that must hold so the forward and backward models co-exist have been studied and identified by \citet{khemakhem2021causal} and \citet{immer2022identifiability}. We employ theoretical results from \textbf{Theorem 2} in \citet{khemakhem2021causal}.  If Eq.~\eqref{eq_bi_forward} and Eq.~\eqref{eq_bi_backward} co-exist, then one of the following scenarios must hold: (1) $(\sigma_2, f_2) = (\frac{1}{Q}, \frac{P}{Q})$ and $(\sigma_1, f_1) = (\frac{1}{Q'}, \frac{P'}{Q'})$ where $Q, Q'$ are polynomials of degree two, $Q, Q'>0$, $P, P'$ are polynomials of degree two or less, and $p(X_1), p(X_2)$ are strictly log-mix-rational-log. (2) $\sigma_1, \sigma_2$ are constant, $f_1, f_2$ are linear and $p(X_1), p(X_2)$ are Gaussian densities.

By making $f_j$ to be nonlinear, scenario (2) does not apply to our HNM. We then choose $\sigma_j$ not to be in the format of $\frac{1}{Q}, Q$ are polynomials of degree two. Hence we let $\sigma_j$ be a piece-wise function. For example, in our formulation, for causal direction $X_1 \rightarrow X_2$, we choose $\sigma_2(X_1) = \text{ReLU}\big(w_3s(w_1X_1)\big) + e^{w_{30}}$, where $e^{w_{30}}$ is to make sure $\sigma_2(X_1) >0$, $s(\cdot)$ is sigmoid activation function. If conditions on $f_j, \sigma_j$ are satisfied, then both scenarios do not hold for our HNM. There is no backward model for any distribution that satisfies Eq. (5)
for bivariate cases. Hence, the model is identifiable.
\end{proof}
Compared to our identifiable conditions in \textbf{Lemma \ref{lem: ident_bivariate}}, conditions in \citet{khemakhem2021causal} ensure scenarios (1) and (2) do not hold by choosing $f_j$ to be nonlinear and invertible. However, the invertibility condition cannot be readily adapted to multivariate cases due to the different number of dimensions between inputs and outputs of function $f_j$ s. Our piecewise $\sigma$ conditions are more relaxed and implementable on multivariate cases.



We then prove the identifiability of multivariate HNM using \textbf{Lemma \ref{lem: ident_bivariate}}. To prove that our HNM is identifiable for multivariate cases is to prove that a unique graph $\mathcal{G}$ can be identified subject to HNM. In the following proof, we employ the \textbf{Proposition 28}, \textbf{Lemma 35}, and \textbf{Lemma 36} from \citet{peters2014causal}. We show the theoretical results from those proposition and lemmas in the format of our HNM. 

Assume that there exists another HNM with graph $\mathcal{G}'$ that $\mathcal{G} \neq \mathcal{G}'$. According to the \textbf{Propostion 28} in \citet{peters2014causal}, let $\mathcal{G}$ and $\mathcal{G}'$ be two different DAGs over a set of variables $\bm X$. Assume $p(\bm X)$ is generated by our HNM and satisfies the Markov condition and causal minimality with respect to $\mathcal{G}$ and $\mathbf{G}'$. Then there are variables $L, Y \in \bm X$ such that for the set $\bm Q :=\textbf{PA}_{Y}^\mathcal{G}\backslash \{L\}, \bm R := \textbf{PA}_{L}^{\mathcal{G}'}\backslash \{Y\}$ and $\bm S := \bm Q \cup \bm R$, we have: A) $L \rightarrow Y$ in $\mathcal{G}$ and $Y \rightarrow L$ in $\mathcal{G}'$. B) $\bm S \subseteq \textbf{ND}_{Y}^{\mathcal{G}}\backslash \{L\}$ and $\bm S \subseteq \textbf{ND}_{L}^{\mathcal{G}'}\backslash \{Y\}$. $\textbf{PA}_Y^{\mathcal{G}}$ is the set of parent variables of $Y$ in graph $\mathcal{G}$. $\textbf{ND}_{Y}^{\mathcal{G}}$ is the set of non-descendant variables of $Y$ in graph $\mathcal{G}'$.

We consider $\bm S = \bm s$ with $p(\bm s)>0$. Denote $L^{*} := L|\bm S = \bm s$ and $Y^{*} := Y|\bm S = \bm s$. \textbf{Lemma 36} in \cite{peters2014causal} states that if $p(\bm X)$ is generated according to the SEM models in Eq.~\eqref{eq_general_SEM}: 
\begin{equation}\label{eq_general_SEM}
    X_n = g_n(X_{\pi_n}, E_n), n = 1, 2, \cdots, N, X_n \in \bm X
\end{equation}
with corresponding DAG $\mathcal{G}$, then for a variable $X_n\in \bm X$, if $\bm K \subseteq \textbf{ND}_{X_n}^{\mathcal{G}}$ then $E_{X_n} \indep \bm K$. Our HNM can be viewed one specific class of the SEM in Eq.~\eqref{eq_general_SEM}. Hence, \textbf{Lemma 36} holds under our HNM and renders $E_Y \indep (L, \bm S)$ and $E_L \indep (Y, \bm S)$ . 

\textbf{Lemma 35} from \citep{peters2014causal} indicates that if $E_Y \indep (Y, \bm Q, \bm R)$ then for all $\bm q, \bm r$ with $p(\bm q, \bm r) > 0$, $g(Y, \bm Q, E_Y)|_{\bm Q = \bm q, \bm R = \bm r} = g(Y|_{\bm Q=\bm q, \bm R = \bm r}, \bm q, E_Y)$. We apply \textbf{Lemma 35} and obtain that 
\begin{equation}\label{eq_g_bi_L}
\scalebox{0.9}{$
    g(L, \bm Q, E_Y)|_{\bm S = \bm s} = g(L|_{\bm S = \bm s}, \bm q, E_Y) = g(L^{*}, \bm q, E_Y)
    $}
\end{equation}
\begin{equation}\label{eq_g_bi_Y}
\scalebox{0.9}{$
    g(Y, \bm R, E_L)|_{\bm S = \bm s} = g(Y|_{\bm S = \bm s}, \bm r, E_L) = g(Y^{*}, \bm r, E_L)
    $}
\end{equation}
Hence according to our definition, we have,
\begin{equation}\label{eq_d1}
    Y^* = f_Y(\bm q, L^{*}) + \sigma_Y(\bm q, L^{*})E_Y, E_Y \indep L^{*} \text{ in } \mathcal{G}
\end{equation}
\begin{equation}\label{eq_d2}
    L^* = f_L(\bm r, Y^{*}) + \sigma_X(\bm r, Y^{*})E_L, E_L \indep Y^{*} \text{ in } \mathcal{G}'
\end{equation}
However, the co-existence of both Eq.~\eqref{eq_d1} and Eq.~\eqref{eq_d2} contradicts our identifiability theorem for the bivariate cases. Therefore, the assumption that there exists another HNM with graph $\mathcal{G}'$ that
$\mathcal{G} = \mathcal{G}'$ does not hold. Only one unique DAG $\mathcal{G}$ can be identified from $p(\bm X)$.
\section{Generality of the proposed method}\label{derivations_generality}
\subsection{Comparison with SoA Methods using Reconstruction Loss }
Many existing methods~\citep{zheng2018dags, zheng2020learning, yu2019dag} adopt the reconstruction loss as the optimization objective, usually based on the SEM with additive noise. The score of the DAG learning problem, i.e., $F(A, \vX)$ in Eq. (3)
of the main paper
can be calculated through Eq.~(\ref{eq_rec_loss}).
\begin{equation}\label{eq_rec_loss}
    \scalebox{0.9}{$
        \begin{split}
            F(A, \bm X) &= \frac{1}{2M} \|\bm X - f(\bm X,A)\|_F^2 \\
            &= \frac{1}{2M} \sum_{m=1}^M \sum_{n=1}^N \big(X_n(m) - f_n(\bm X(m), A_n)\big)^2
        \end{split}
        $}
\end{equation}
 Substituting $\sigma_n^2(m) = \sigma^2$ into our training objective in Eq.~\eqref{eq_nll_detailed}, we can obtain
\begin{equation}\label{eq8}
    \scalebox{0.8}{$
    \begin{split}
       & \mathcal{L}_{nll}(\bm X, A, \sigma^2) \\ 
       &= MN\log(\sqrt{2\pi \sigma^2}) +  \frac{\sum_{m=1}^M \sum_{n=1}^N \big(X_n(m) - f_n(\bm X(m), A_n)\big)^2}{2\sigma^2} \\
       &= \frac{\sum_{m=1}^M \sum_{n=1}^N \big(X_n(m) - f_n(\bm X(m), A_n)\big)^2}{2\sigma^2} + \text{const}
    \end{split}
    $}
\end{equation}
With constant variance $\sigma^2$,  optimizing the NLL loss in Eq.~(\ref{eq8}) is equivalent to optimizing the reconstruction loss in Eq.~(\ref{eq_rec_loss}).

\subsection{Comparison with SoA Methods Using Likelihood Loss under SEM with Additive Noise.}
GOLEM-NV~\cite{ng2020role} and GraN-DAG~\citep{lachapelle2019gradient} relax the equal noise variance assumption across variables under SEM with additive noise. Substitute $\sigma^2_n(m) = \sigma^2_n$ into Eq.~(\ref{eq_nll_detailed}), we obtain $\mathcal{L}_{nll}(\bm X, A, \bm \sigma^2)$ as:
\begin{equation}\label{eq9}
    \scalebox{0.8}{$
    \begin{split}
       & \mathcal{L}_{nll}(\bm X, A, \bm \sigma^2) \\
       =&  M\sum_{n=1}^N\log(\sqrt{2\pi \sigma^2_n}) + \sum_{n=1}^N \frac{\sum_{m=1}^M \big(X_n(m) - f_n(\bm X(m), A_n)\big)^2}{2\sigma^2_n}
    \end{split}
    $}
\end{equation}
Eq.~\eqref{eq9} is equivalent to the loss in \citet{lachapelle2019gradient}. $\bm \sigma^2 = (\sigma^2_1, \sigma^2_2, \cdots, \sigma^2_N)$ are treated as parameters and estimated during training. However, if we choose causal function $f_n(\cdot)$ in Eq.~\eqref{eq9} as a linear function, i.e., $f_n(\bm X(m), A_n) = \bm X(m)A_n$, then we can show that our derived loss is equivalent to the loss derived in \citet{ng2020role}. 
As shown in \citet{ng2020role}, when considering $\mathcal{L}_{nll}$ as a function of $\sigma_n^2$, its local extreme values occur when $\frac{\partial \mathcal{L}_{nll}}{\partial \sigma^2_n} = 0$, i.e.,
\begin{equation}\label{eq10}
    \hat{\sigma}^2_n = \frac{\sum_{m=1}^M\Big[\big(X_n(m) - \bm X(m)A_n\big)^2\Big]}{M}
\end{equation}
Substitute Eq.~(\ref{eq10}) into Eq.~(\ref{eq9}), we obtain %
$\mathcal{L}_{nll}$ as:
\begin{equation}\label{eq11}
    \scalebox{0.75}{$
    \begin{split}
     & \mathcal{L}_{nll} \\
     =& \frac{MN}{2}\Big(1+\log(2\pi) - \log(M)\Big) + \frac{M}{2}\sum_{n=1}^N\Big[  \log\sum_{m=1}^M\big(X_n(m) - \bm X(m)A_n\big)^2\Big]\\
       =& \frac{M}{2}\sum_{n=1}^N\Big[\log\sum_{m=1}^M\big(X_n(m) - \bm X(m)A_n\big)^2\Big] + \text{const}
    \end{split}
    $}
\end{equation}
Eq.~(\ref{eq11}) is equivalent to the likelihood-based objective in Appendix C.1 of~\citet{ng2020role} under the assumption that $A$ satisfies acyclicity constraint.

Hence, the losses that are derived under the additive noise SEM are merely special cases for the losses derived under more general SEM with affine noise.
\section{The Iterative DAG Learning Method}
\subsection{Procedures of the propose algorithm}\label{pseudo-code}
{We summarized the main procedure of our proposed two-phase iterative DAG learning algorithm in Algorithm \ref{algorithm1}, with Phase-I procedure in Algorithm \ref{algorithm2} and Phase-II procedure in Algorithm \ref{algorithm3}.}
\begin{algorithm}[hpt]
    \centering
    \begin{algorithmic}[1]
        \STATE \textbf{Input:} Data $\vX$ 
        \STATE  \textbf{Output:} $(W^{(1)})^*, (W^{(2)})^*, (W^{(3)})^*$
        \STATE  Initial $(W^{(1)})^0, (W^{(2)})^0, (W^{(3)})^0$ with $0$
        \STATE  $\big(\hat{\bm \sigma}\big)^0 \leftarrow \sigma\big(\vX, (W^{(1)})^0, (W^{(3)})^0\big)$ 
        \STATE  $(W^{(1)})^1, (W^{(2)})^1 \leftarrow \texttt{Phase-II-Update}(\vX, (\hat{\bm \sigma})^0)$
        \STATE  $t\leftarrow 0$
        \REPEAT 
        \STATE  $t\leftarrow t+1$
        \STATE  $\{\text{Phase-I step:}\}$
        \\STATE  $(W^{(3)})^t = \texttt{Phase-I-Update}\big(\vX, (W^{(1)})^{t-1}, (W^{(2)})^{t-1}\big)$
        \STATE  $\big(\bm \hat{\sigma}\big)^t \leftarrow \sigma\big(\vX, (W^{(1)})^{t-1}, (W^{(3)})^{t}\big)$
        \STATE  $\{\text{Phase-II step:}\}$
        \STATE  $(W^{(1)})^t, (W^{(2)})^t = \texttt{Phase-II-Update}\big(\vX,(\hat{\bm \sigma})^t \big)$
        \UNTIL{Converge}
        \STATE  $(W^{(1)})^*, (W^{(2)})^*, (W^{(3)})^* \leftarrow (W^{(1)})^t, (W^{(2)})^t, (W^{(3)})^t$
        \STATE  \textbf{Return} $(W^{(1)})^*, (W^{(2)})^*, (W^{(3)})^*$
    \end{algorithmic}
    \caption{Main Procedure}\label{algorithm1}
\end{algorithm}
\begin{algorithm}[hpt]
    \centering
    \caption{Phase-I-Update Procedure}\label{algorithm2}
    \begin{algorithmic}[1]
        \STATE  \textbf{Input:} Data $\vX$, $\hat{W}^{(1)}$, $\hat{W}^{(2)}$
        \STATE  \textbf{Output: }$(W^{(3)})^*$
        \STATE  Initial $(W^{(3)})^0$ with small values
        \STATE  $ (W^{(3)})^* = \arg\min_{W^{(3)}}\mathcal{L}_{nll}(\vX, \hat{W}^{(1)}, \hat{W}^{(2)}, W^{(3)})$
        \STATE  \textbf{Return}  $(W^{(3)})^*$
    \end{algorithmic}
\end{algorithm}
\begin{algorithm}[hpt]
    \centering
    \caption{Phase-II-Update Procedure}\label{algorithm3}
    \begin{algorithmic}[1]
        \STATE  \textbf{Input:} Data $\vX$; Noise variances $\hat{\bm \sigma}^2$ 
        \STATE  \textbf{Output: }$(W^{(1)})^*, (W^{(2)})^*$
        \STATE  Initial $(W^{(1)})^0, (W^{(2)})^0$ with small values.
        \STATE  $\alpha=0, \rho=1, t\leftarrow0$
        \WHILE{$h\Big((W^{(1)})^t\Big) > \epsilon$}
        \WHILE{$\rho < \rho_{max}$}
        \STATE \begin{equation}
            \scalebox{0.7}{$
            \begin{split}
                & (W^{(1)})^c , (W^{(2)})^c \\
            & = \arg\min_{W^{(1)}, W^{(2)}}\mathcal{L}_{nll}(\vX, W^{(1)}, W^{(2)}, \hat{\bm \sigma}^2) + \frac{\rho}{2}h^2\Big((W^{(1)})^c\Big) + \alpha h\Big((W^{(1)})^c\Big)
            \end{split}
            $}
        \end{equation}
        \IF{$h\Big((W^{(1)})^c\Big) < c\cdot h\Big((W^{(1)})^t\Big)$}
            \STATE  $(W^{(1)})^{t+1}, (W^{(2)})^{t+1} \leftarrow (W^{(1)})^c, (W^{(2)})^c$
        \ELSE 
            \STATE  $\rho \leftarrow s\cdot \rho$
        \ENDIF 
        \ENDWHILE 
        \STATE  $\alpha \leftarrow \alpha + \rho h\Big((W^{(1)})^{t+1}\Big)$
        \STATE  $t \leftarrow t+1$
        \ENDWHILE 
        \STATE $(W^{(1)})^*, (W^{(2)})^*\leftarrow (W^{(1)})^t, (W^{(2)})^t$
        \STATE  \textbf{Return}  $(W^{(1)})^*, (W^{(2)})^*$
    \end{algorithmic}
\end{algorithm}
\section{Dataset Description}\label{sec_data_description}


\subsection{Synthetic Data}
To validate the effectiveness of various types of datasets, we apply our proposed algorithms to synthetic data, where various levels of noise heterosedacity are incorporated during the generation process. We adopt the standard setup in \cite{zheng2020learning, yu2019dag, lachapelle2019gradient}. The ground-truth DAGs are generated from Erdo-Renyi (ER) 
with $k$ expected edges, which we set as 1 and 2. We generate 10 graphs for each graph setting
with different numbers of variables $d = 5, 10, 20, 50$. For each setting, we simulate 10 trials with $n=1000$ data observations.

\paragraph{Synthetic homoscedastic noise data.}
We first conduct experiments on nonlinear synthetic homoscedastic noise data. 
We consider two types of homoscedastic data. The simpler version assumes that the noise corresponding to different variables across data samples has equal variance, i.e. noises are homoscedastic w.r.t both variables and observations. The model formulation of \citet{zheng2020learning} satisfies such a data generation process and we employ similar procedures to generate nonlinear data with Gaussian noise. We denote such type as \textbf{homo-EV} data. Given a randomly generated binary DAG $\mathcal{G}$, the observations are sampled from the SEMs in Eq.~\eqref{eq_data_generate_homo} following the topological order induced by $\mathcal{G}$ :
\begin{equation}\label{eq_data_generate_homo}
    X_n = f_n(X_{pa(n)}) + Z_n, n = 1, 2, \cdots, N
\end{equation}
where we chose $f_n(\cdot)$ to be randomly initialized MLPs with one hidden layer of size 100 and sigmoid activation. \textbf{$Z_n$ are standard Gaussian noises, i.e., }$Z_n\sim \mathcal{N}(0, 1)$.
The slighter complex version, denoted as \textbf{homo-NV} data, allows the noises for different variables to have non-equal variances yet the noise variances across observations remain to be the same, i.e., $Z_n\sim \mathcal{N}(0, \sigma^2_n), n = 1,2,\cdots, N$. We obtain the variances by sampling from a uniform distribution, i.e., $\sigma^2_n \sim U[0.5, 2]$. We employ a similar data generation process with \citet{lachapelle2019gradient} since its formulation fits assumptions.

\paragraph{Synthetic heteroscedastic noise data.} We then evaluate our proposed algorithm on nonlinear synthetic heteroscedastic noise data. 
For heteroscedastic noise data, the noise variances vary across both variables and observations. Hence, heteroscedastic noise data is more challenging to accurately recover the DAG from the given observations. Given a random directed acyclic graph $\mathcal{G}$ with binary entries, we generate observations from the SEMs in Eq.~\eqref{eq_data_generate_hetero} following the topological order induced by $\mathcal{G}$:
\begin{equation}\label{eq_data_generate_hetero}
    X_n = f_n(X_{pa(n)}) + e^{g_n(X_{pa(n)})}Z_n, n = 1, 2, \cdots, N
\end{equation}
$f_n(\cdot)$ and $g_n(\cdot)$ are chosen to be randomly initialized MLPs with one hidden layer of size 100 and sigmoid activation. {During the data generation process, we choose the variance function to be a global estimator, i.e., $\sigma_n = e^{g_n(\bm X_{pa(n)})}$ in order to test our piece-wise variance function's ability in recovering accurate variances.} $Z_n$ are standard Gaussian noises, $Z_n\sim\mathcal{N}(0, 1)$. 
We denote the data generated through the above process as \textbf{hetero data}.


\subsection{Real Data. }
To demonstrate the effectiveness on real data, we test the proposed method on two widely-studied real benchmark datasets: the Sachs dataset~\cite{sachs2005causal} and the cause-effect pairs dataset~\cite{sgouritsa2015inference}. The Sachs dataset contains real-world flow cytometry data from for modeling protein signaling pathways. The dataset comprises continuous measurements of 11 phosphoproteins in individual T-cells. We specifically selected 853 observations corresponding to the first experimental condition outlined in \cite{sachs2005causal} as our dataset $\mathcal{D}$. For our reference graph (ground truth), we utilize the provided DAG, which consists of 11 nodes and 17 edges. It is important to note that this consensus graph may not provide a comprehensive or entirely accurate representation of the system under study. The cause-effect pairs dataset provides {99} sets of data with given cause-effect relations between variables.

\section{Experiment Setting}\label{eval_metrics}
\paragraph{Evaluation metrics.} We employ 3 evaluation metrics to evaluate the accuracy of DAG learning: SHD, auSHDC, auPRC.
\paragraph{SHD:} SHD is the most widely used evaluation metric to evaluate the accuracy of a learned graph. However, a heuristic threshold approach must be performed to infer a DAG $\mathcal{G}$ from the weighted adjacency matrix. We report two SHDs in this paper: the SHD with a threshold of $0.3$, which is also chosen by the majority of the existing methods, and the minimum SHD obtained by using thresholds within the chosen range. 


\textbf{We also choose evaluation metrics that are less susceptible to thresholding. } 

\paragraph{auSHDC: }To reduce the effect of thresholding on SHD, we choose a reasonable range for thresholds, estimate the SHD value of the graph thresholded with different thresholds, and plot the curve of SHDs versus thresholds.  We employ the area under the SHD curve as a measurement of graph accuracy. A small auSHDC value indicates that the applied algorithm performs well and is robust regardless of the thresholds. Since the synthetic graph parameters are from $U([-2.0, 0.5]\cup[0.5, 2.0])$, we believe $[0.2, 0.75]$ is a reasonable range for synthetic datasets. We adjust the range to be $[0.25, 0.75]$ for large models based on the empirical results.

\paragraph{auPRC: } auPRC does not require to choose a constant value as a threshold. The precision-recall curve (PRC) can be plotted from the learned weighted adjacency matrix. The accuracy performance can be reflected by the area under the precision-recall curve (auPRC). The graph with larger auPRC values has a higher accuracy.

\subsection{Implementation Details}
We implemented the algorithm following the pseudo-code outlined in Algorithm \ref{algorithm1}, \ref{algorithm2}, and \ref{algorithm3} in the main paper. We choose the LBFGS optimizer from the \texttt{scipy} library. For hyper-parameters in Algorithm 3, we set $\epsilon = 10^{-8}, c = 0.25, s = 10$ as suggested in the \citet{zheng2018dags} where the augmented Lagrangian process for DAG learning is first introduced. We set the number of hidden neurons as $m_1 =10$ for all the baselines. 
We conducted all experiments on a workstation with a 3.1 GHz CPU.

\section{Detailed empirical results}\label{detailed_results}
\subsection{Nonlinear Synthetic Data}
We show the detailed empirical results for homo-ev, homo-nv, and hetero data
in Table \ref{tb:homo-ev}, \ref{tb:homo-nv}, and \ref{tb:hetero} respectively. We compared to baselines on different types of data depending on the matchness of their underlying model assumptions. However, since the GOLEM-EV and GOLEM-NV are implemented for data with linear relations, hence the results on nonlinear data are worse than other nonlinear DAG learning methods. In conclusion, we expect, and observed from three tables that our proposed method can achieve comparable results on data with equal noise variances across observations. Our proposed method outperforms the baselines on heteroscedastic data whereby the noise variances also vary with different values of causes.

We also performed experiments on larger datasets with $N = 50$ variables. Based on the data generation process that we elaborate on above, a significant degree of noise has been embedded into the data, causing compromised performance on both our method and baseline methods. However, we will probably never expect such an amount of data noise in real-world applications. 
\begin{table*}[hpt]
\centering
\scriptsize
\scalebox{0.95}{
\begin{tabular}{cc|ccc|ccc|ccc}
\hline
\hline
 &&\multicolumn{3}{c}{\underline{NOTEARS-MLP}} & \multicolumn{3}{c}{\underline{GOLEM-EV}} & \multicolumn{3}{c}{\underline{ICDH(ours)}} \\
graph & d & auSHDC & SHD & auPRC & auSHDC & SHD & auPRC & auSHDC & SHD & auPRC \\
\hline
&$5$& $0.68\pm0.15$ & $1.0\pm0.20$ & $0.64\pm0.03$ & $1.85 \pm 0.82$ & $3.5\pm1.55$ & $0.38\pm0.03$ & $0.69\pm0.39$ & $1.0\pm 0.20$ & $0.64\pm0.03$ \\
ER1 &$10$ & $1.86\pm0.90$ & $2.2 \pm 1.81$ & $0.70\pm0.05$ & $5.56\pm1.61$ & $10.8\pm2.81$ & $0.27\pm0.01$ & $1.44\pm0.52$ & $2.0\pm1.83$ & $0.71\pm0.02$ \\
&$20$ & $3.22\pm2.70$ & $6.2\pm2.77$ & $0.78\pm0.06$ & $14.92\pm5.42$ & $28.4\pm4.27$ & $0.13\pm0.01$ & $2.61\pm1.39$ & $5.2\pm2.49$ & $0.82\pm0.13$\\
&$50$&  - & $24.5\pm 6.20$ & - & - & $50.6\pm8.4$ & - & - & $22.5\pm5.5$ & - \\
\hline
 &&\multicolumn{3}{c}{\underline{NOTEARS-MLP}} & \multicolumn{3}{c}{\underline{GOLEM-EV}} & \multicolumn{3}{c}{\underline{ICDH(ours)}} \\
graph & d & auSHDC & SHD & auPRC & auSHDC & SHD & auPRC & auSHDC & SHD & auPRC \\
 \hline
&$5$ &  $1.04\pm0.61$ & $1.80\pm0.91$ & $0.78\pm0.07$ & $2.28 \pm 1.54$ & $4.40\pm2.72$ & $0.2234\pm0.01$ & $1.03\pm0.60$ & $1.8\pm0.98$ & $0.78\pm0.07$ \\
ER2 & $10$ & $2.35\pm0.89$ & $3.4\pm1.74$ &$0.87\pm0.06$ & $8.91\pm2.17$ & $17.2\pm6.07$ & $0.19\pm0.02$ & $2.31\pm0.92$ & $3.2\pm1.72$ & $0.87\pm0.05$ \\
& $20$ & $7.98\pm0.93$ & $14.0\pm2.00$ & $0.68\pm0.07$ & $23.17\pm12.88$ & $45.8\pm6.39$ & $0.08\pm0.03$ & $5.13\pm1.74$ & $8.4\pm3.98$ & $0.84\pm0.08$  \\
&$50$&  - & $39.0\pm 9.7$ & - & - & $100.6\pm8.4$ & - & - & $30.9\pm10.3$ & - \\
\hline
\end{tabular}
}
\caption{Comparison of all baseline algorithms on nonlinear synthetic homoscedastic noise datasets with equal variances across variables and observations (homo-EV): results (mean $\pm$ standard deviation over $10$ trails) on auSHDC, SHD, and auPRC. 
}
\label{tb:homo-ev}
\end{table*}
\begin{table*}[hpt]
\centering
\scriptsize
\scalebox{0.9}{
\begin{tabular}{cc|ccc|ccc|ccc}
\hline
\hline
 &&\multicolumn{3}{c}{\underline{auSHDC}} & \multicolumn{3}{c}{\underline{SHD}} & \multicolumn{3}{c}{\underline{auPRC}} \\
graph & methods & $d5$ & $d10$ & $d20$ & $d5$ & $d10$ & $d20$ & $d5$ & $d10$ & $d20$ \\
\hline
ER1 & NOTEARS-MLP & $0.40\pm0.38$ & $0.53\pm 0.31$ & $2.60\pm1.19$ & $0.4\pm0.80$ & $0.4\pm0.49$ & $4.2\pm1.94$ & $0.72\pm0.10$ & $0.89\pm0.02$ & $0.80\pm0.08$ \\
& GOLEM-NV & $2.14\pm1.57$ & $3.95\pm2.81$ & $10.25\pm2.21$ & $3.20\pm3.71$ & $5.60\pm5.82$ & $15.00\pm5.48$ & $0.85\pm0.13$ & $0.88\pm0.10$ & $0.92\pm0.04$\\
& GraN-DAG & - & - & - & $2.4\pm1.51$ & $3.6\pm1.52$ & $6.2\pm2.77$ & - & - & -\\
& ICDH(ours) & $0.41\pm0.37$ & $0.49\pm0.26$ & $2.53\pm1.19$ & $0.4 \pm 0.80$ & $0.6\pm0.49$ & $4.2\pm1.94$ & $0.71\pm0.11$ & $0.89\pm 0.02$ & $0.79\pm0.09$ \\
\hline
 &&\multicolumn{3}{c}{\underline{auSHDC}} & \multicolumn{3}{c}{\underline{SHD}} & \multicolumn{3}{c}{\underline{auPRC}} \\
graph & methods & $d5$ & $d10$ & $d20$ & $d5$ & $d10$ & $d20$ & $d5$ & $d10$ & $d20$ \\
 \hline
ER2 & NOTEARS-MLP & $0.55\pm0.98$ & $2.87\pm0.10$ &$5.63\pm2.26$ & $1.0\pm2.00$ & $4.0\pm2.61$ & $9.6\pm4.30$ & $0.84\pm0.13$ & $0.78\pm0.13$ & $0.82\pm0.07$ \\
& GOLEM-NV & $5.43\pm1.12$ & $20.67\pm8.74$ & $76.64\pm18.72$ & $7.20\pm2.93$ & $36.80\pm19.33$ & $149.00\pm45.55$ & $0.80\pm0.07$ & $0.62\pm0.19$ & $0.43\pm0.14$ \\
& GraN-DAG & - & - & - & $2.2\pm2.95$ & $9.4\pm4.04$ & $18.6\pm 7.73$ & - & - & -\\
& ICDH(ours) &  $0.55\pm0.98$ & $2.83\pm1.01$ & $5.39\pm2.31$ & $1.0 \pm 2.00$ & $4.0\pm2.61$ & $8.6\pm4.00$ & $0.84\pm0.13$ & $0.78\pm0.13$ & $0.83\pm0.08$ \\
\hline
\end{tabular}}

\caption{Comparison of all baseline algorithms on nonlinear synthetic homoscedastic noise datasets with equal variances across variables (homo-NV): results (mean $\pm$ standard deviation over $10$ trails) on auSHDC, SHD, and auPRC. 
}
\label{tb:homo-nv}
\end{table*}
\begin{table*}[hpt]
\centering
\scriptsize
\scalebox{0.77}{
\begin{tabular}{cc|cccc|cccc|cccc}
\hline
\hline
 &&\multicolumn{4}{c}{\underline{auSHDC}} & \multicolumn{4}{c}{\underline{SHD}} & \multicolumn{4}{c}{\underline{auPRC}} \\
graph & methods & $d5$ & $d10$ & $d20$ & $d50$ & $d5$ & $d10$ & $d20$ & $d50$ & $d5$ & $d10$ & $d20$ & $d50$ \\
\hline
& NOTEARS-MLP & $2.75\pm1.51$ & $10.13\pm2.79$ & $27.02\pm9.74$ & - &  $4.6\pm1.51$ & $20.6\pm4.77$ & $54.4\pm27.00$ & $144.1\pm38.0$ &  $0.25\pm0.03$ & $0.25\pm 0.01$ & $0.17\pm 0.02$ & -  \\
& GOLEM-NV & $2.29\pm1.57$ & $13.14\pm3.25$ & $15.23\pm10.87$ & - & $4.2\pm1.99$ & $20.6\pm16.4$ & $54.40\pm24.20$ & - & $0.37\pm0.08$ & $0.21\pm0.11$ & $0.83\pm0.06$ & -\\
& GraN-DAG & - & - & - & - & $6.2\pm1.92$ & $27.4\pm7.99$ & $86.0\pm44.29$ & - & - & - & - & -\\
ER1 & GraN-DAG++ & - & - & - & - & $4.8\pm2.28$ & $22.2\pm16.5$ & $58.4\pm24.79$ & $161.1\pm10.80$ & - & - & - & -\\
& CAM & - & - & - & - & $7.1\pm1.70$ & $21.3\pm5.48$ & $69.2\pm7.64$ & - & - & - & - & - \\
& LiNGAM & - & - & - & - & $7.0\pm1.61$ & $16.8\pm9.04$ & $27.5\pm5.70$ & - & - & - & - & - \\
& GES & - & - & - & - & $4.0\pm1.95$ & $15.9\pm3.59$ & $23.7\pm4.03$ & - & - & - & - & - \\
& GFBS & - & - & - & - & $\bf{3.7\pm1.77}$ & $11.6\pm3.92$ & $25.6\pm4.58$ & - & - & - & - & - \\
& ICDH(ours) & $2.28\pm0.84$ & $8.91\pm2.61$ & $12.10\pm7.98$ & - & $4.0\pm1.58$ & $\bf{11.4\pm4.93}$ & $\bf{23.6\pm9.34}$  & $\bf{134.5\pm23.40}$ & $0.39\pm0.05$ & $0.20\pm0.03$ & $0.20\pm0.01$ & -\\
\hline
 &&\multicolumn{4}{c}{\underline{auSHDC}} & \multicolumn{4}{c}{\underline{SHD}} & \multicolumn{4}{c}{\underline{auPRC}} \\
graph & methods & $d5$ & $d10$ & $d20$ & $d50$ & $d5$ & $d10$ & $d20$ & $d50$ & $d5$ & $d10$ & $d20$ & $d50$ \\
\hline
& NOTEARS-MLP & $3.29\pm2.36$ & $11.93\pm6.83$ & $15.20\pm7.97$ & - & $5.20\pm3.71$ & $22.0\pm5.30$ & $47.60\pm5.90$ & $111.1\pm11.80$ & $0.61\pm0.04$ & $0.27\pm0.11$ & $0.08\pm0.01$ & -\\
& GOLEM-NV & $3.90\pm2.30$ & $20.05\pm10.17$ &  $55.88\pm38.91$ & - &  $6.20\pm2.10$ & $25.60\pm5.94$ & $67.60\pm7.73$ & - & $0.60\pm0.07$ & $0.47\pm0.15$ & $0.13\pm0.16$ & -\\
ER2 & GraN-DAG++ & - & - & - & -& $\bf{3.8\pm2.44}$ & $17.4\pm5.15$ & $42.6\pm4.31$ & - & - & - & - & -\\
& GraN-DAG & - & - & - & - & $\bf{3.8\pm2.83}$ & $19.6\pm 4.65$ &  $77.60\pm10.20$ & $123.1\pm9.60$  & - & - & - & -\\
& CAM & - & - & - & - & $5.2\pm2.71$ & $22.4\pm3.93$ & $74.1\pm7.60$ & - & - & - & - & - \\
& LiNGAM & - & - & - & - & $11.8\pm1.47$ & $24.1\pm3.36$ & $45.5\pm2.96$ & - & - & - & - & - \\
& GES & - & - & - & - & $10.2\pm2.04$ & $21.5\pm3.64$ & $45.8\pm4.98$ & - & - & - & - & - \\
& GFBS & - & - & - & - & $6.9\pm1.45$ & $18.4\pm3.78$ & $44.9\pm4.63$ & - & - & - & - & - \\
& ICDH(ours) & $2.63\pm1.89$ & $8.39\pm5.28$ & $10.10\pm6.25$ & - & $4.2\pm3.06$ & $\bf{15.0\pm3.05}$ & $\bf{40.0\pm4.40}$ & $\bf{102.0\pm30.80}$ & $0.61\pm 0.07$ & $0.21\pm 0.05$ & $0.08\pm0.01$ & -\\
\hline
\end{tabular}}
\caption{Comparison of all baseline algorithms on nonlinear synthetic heteroscedastic noise datasets: results (mean $\pm$ standard deviation over $10$ trails) on auSHDC, SHD, and auPRC. 
}
\label{tb:hetero}
\vspace{-1em}
\end{table*}

\subsection{Comprehensive Comparison}\label{comprehnsive_comparision}
\begin{table}[H]
    \centering
    \small
    \begin{tabular}{|c||c|c|c|}
        \hline 
        \multirow{2}{*}{Methods} 
        & \multicolumn{3}{c|}{ER1} \\
        & d5 & d10 & d20 \\
        \hline 
        HOST & 
        $\bf{2.5\pm1.43}$ & $\bf{8.2\pm 2.36}$ & $23.9\pm 6.93$ \\
        sortnregress 
        & $4.0\pm 2.19$ & $13.9\pm 4.13$ & $46.7\pm8.32$ 
        \\
        CD-NOD 
        & $7.1\pm1.60$ & $18.6\pm5.48$ & $30.6\pm 7.03$ 
        \\
        \rowcolor{Gray}
        ICDH 
        & $4.0\pm 1.58$ & $11.4\pm 4.93$ & $\bf{23.6\pm9.34}$ 
        \\
        \hline 
        \hline 
        \multirow{2}{*}{Methods} 
        & \multicolumn{3}{c|}{ER2}\\
        & d5 & d10 & d20 \\
        \hline 
        HOST &
        $6.3\pm 1.90$ & $15.7\pm4.75$ & $45.1\pm 7.48$ \\
        sortnregress 
        & $7.2\pm1.40$ & $19.8\pm4.47$ & $44.3\pm4.50$ \\
        CD-NOD 
        & $5.2\pm1.45$ & $24.5\pm3.36$ & $63.7\pm7.60$ \\
        \rowcolor{Gray}
        ICDH 
        & $\bf{4.2\pm3.06}$ & $\bf{15.0\pm3.05}$ & $\bf{40.0\pm4.40}$\\
        \hline 
    \end{tabular}
    \caption{Comparison to methods under different SEMs.}
    \label{tab:comprehensive}
    \vspace{-5 mm}
\end{table}
The methods designed for heterogeneous data (CD-NOD), and the scale-invariant data (sortnregress) use quite different assumptions from heteroscedastic noise data (ours).  The heteroscedastic noise may render sortnregress and CD-NOD to wrongly estimate the marginal variance and/or identify the wrong causal order. We present the performance of methods that satisfy assumptions for heteroscedastic noise data, i.e., HOST and Our ICDH against methods that are developed under different assumptions, i.e., CD-NOD and sortnregress on heteroscedastic noise data in Table \ref{tab:comprehensive}. The empirical results on hetero data show ICDH and HOST perform better than CD-NOD and sortnregress. Since HOST and ICDH adopt different optimization approaches, ICDH performs better on denser graphs than HOST. Please see the results in Appendix \ref{dense_graphs}.
\subsection{Performance on Denser Graphs}\label{dense_graphs}
We experimented on hetero data generated from ER3 d10 graphs and evaluated the accuracy via SHD and SID. 
\begin{table}[hpt]
    \centering
    \normalsize
    \begin{tabular}{|c|c|c|}
        \hline 
        Methods & SHD (0.3) & SID \\
        \hline 
        NOTEARS & 20.7 & 54.7 \\
        HOST & 16.6 & 42.2 \\
        GFBS & 19.3 & 54.3 \\
        GES & 34.0 & 71.5 \\
        \rowcolor{Gray}
        ICDH (ours) & \bf{14.1} & \bf{39.5} \\
        \hline 
    \end{tabular}
    \caption{Empirical results on ER3 d10 Hetero noise data.}
    \label{tab:dense_graph}
\end{table}

Our ICDH method achieves the optimal SHD ($14.1$) and SID ($39.5$), outperforms the essential state-of-the-art baselines. In particular, combined with results from Table \ref{tab:comprehensive}, our ICDH outperforms HOST by a larger margin on larger and denser graphs.
\section{Limitations}
{In this paper, we 
propose a novel DAG learning formulation based on a general SEM which allows the modeling of the variation of noise variances across both variables and observations. To solve the increasing difficulties in optimization, we propose a two-phase iterative learning algorithm. However, there are two main limitations to the proposed algorithm. 
First, the proposed algorithm inevitably inherits the typical optimization difficulties for iterative optimization algorithms. The proposed iterative DAG learning algorithm only guarantees convergence to a stationary solution. Hence good initialization is crucial for the algorithm to achieve satisfactory performance. Another limitation is that our formulation has to satisfy the definition of HNM in \textbf{Definition 1
} and is identifiable only when sufficient conditions in \textbf{Theorem \ref{theorem: identifiable}
} are satisfied.

}

\end{document}